\newcommand{\gdhBlue}[1]{{\textcolor{blue}{#1}}}
\newcommand{\comment}[1]{}
\newcounter{counter}
\newtheorem{theorem}{Theorem}
\newtheorem{assumption}{Assumption}
\newtheorem{Lemma}[counter]{Lemma}
\newtheorem*{remark}{Remark}
\title{Fast Heterogeneous Federated Learning with \\Hybrid Client Selection}
\author[1,2]{Guangyuan Shen}
\author[\space\space2]{Dehong Gao\thanks{The first and second author contribute equally to this work. }}
\author[1]{Duanxiao Song}
\author[\space\space1]{Libin Yang\thanks{Contact Author}}
\author[1]{Xukai Zhou}
\author[3]{Shirui Pan}
\author[4]{Wei Lou}
\author[1]{Fang Zhou}
\affil[1]{Department of Cyber Science and Technology, Northwestern Polytechnical University, China}
\affil[2]{Alibaba Group, China}
\affil[3]{Department of Data Science and AI, Faculty of IT, Monash University, Australia}
\affil[4]{Department of Computing, The Hong Kong Polytechnic University, Hong Kong, China}
\affil[ ]{\texttt{\{gyshen, songduanxiao, libiny, zhouxukai, zhoufang\}@mail.nwpu.edu.cn,
dehong.gdh@alibaba-inc.com,
shirui.pan@monash.edu,
csweilou@comp.polyu.edu.hk}}
\begin{document}
\maketitle
\begin{abstract}
Client selection schemes are widely adopted to handle the communication-efficient problems in recent studies of Federated Learning (FL).
However, the large variance of the model updates aggregated from the randomly-selected unrepresentative subsets directly slows the FL convergence.
We present a novel clustering-based client selection scheme to accelerate the FL convergence by variance reduction.
Simple yet effective schemes are designed to improve the clustering effect and control the effect fluctuation, therefore, generating the client subset with certain representativeness of sampling.
Theoretically, we demonstrate the improvement of the proposed scheme in variance reduction.
We also present the tighter convergence guarantee of the proposed method thanks to the variance reduction.
Experimental results confirm the exceed efficiency of our scheme compared to alternatives.
\end{abstract}
\section{Introduction}
Federated Learning (FL) is a distributed learning paradigm for training a global model from data scattered across different clients~\cite{konevcny2015federated}.
During the training process, all the clients need to operate data locally and transfer the model updates between servers and themselves back and forth. 
Such a training process may  raise many challenges, with communication cost often being the critical bottleneck~\cite{kairouz2021advances}.

Many studies have found that different clients might transfer similar (or redundant) model updates to the server, which is a waste of communication costs~\cite{balakrishnan2022diverse,pmlr-v139-fraboni21a,karimireddy2020scaffold}.
Client selection schemes are widely adopted to reduce the waste of communication costs, e.g., Federated Averaging (\texttt{FedAvg}) where only the randomly-selected subset of clients transfer their model update to the server instead of yielding all clients involved~\cite{mcmahan2017communication:Lixto}. 
\texttt{FedAvg} can maintain the learning efficiency with reduced communication costs, since the random selection schemes can reduce the redundant update transmission.
When \texttt{FedAvg} meets client sets with low similarity, the server can not accurately pick out the representative clients.
The large variance of the update gradient aggregated from the unrepresentative subset directly slows the training convergence~\cite{pmlr-v139-fraboni21a,katharopoulos2018not}.

To accelerate the training convergence by variance reduction, many client selection criteria have been proposed in recent literature, e.g., importance sampling, where the probabilities for clients to be selected is proportional to their importance measured by the norm of update~\cite{chen2020optimal}, data variability~\cite{rizk2021optimal}, and test accuracy~\cite{mohammed2020budgeted}. 
However, importance sampling could not effectively capture the similarities among the clients.
As shown in Figure~\ref{Difference}\gdhBlue{(a)}, applying importance sampling could cause learning inefficiency as the clients transfer excessive important yet similar updates to the server.
To make better use of the similarities among clients, some researchers propose raw gradient-based cluster sampling schemes that group the clients with similar gradient together~\cite{pmlr-v139-fraboni21a,muhammad2020fedfast}, shown in Figure~\ref{Difference}\gdhBlue{(b)}.
If the clustering effect is good enough, cluster sampling can easily pick out the representative clients.
Unfortunately, the effect of raw gradient-based clustering methods faces the following problems:
\textbf{(i)~Poor Effect}, the high dimension gradient of a client is too complicated to be an appropriate cluster feature, which can not bring a good clustering effect.
\textbf{(ii) Effect Fluctuation}, due to the limitations of the clustering algorithm, when operating the client set with different parameters settings, e.g., different numbers of clusters, the clustering effect tends to fluctuate greatly.
After clustering with poor and fluctuant effect, there always exists some clusters with low client similarity.
Applying client selection in the clusters with low client similarity may slow the overall training convergence.
Besides, both the importance and cluster sampling methods require all the clients to return the raw gradient information for better selection, which runs against the communication reduction objective.
\begin{figure}[H]
    \centering
    \includegraphics[width=12.8cm]{./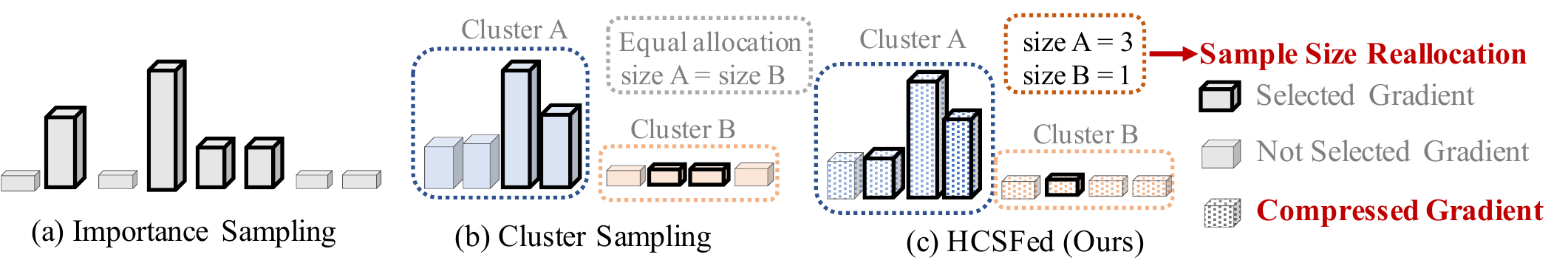}
    \caption{The differences among client selection schemes. The heights represent the norm of the gradients. The different colors denote different clusters. We highlight the main difference by red bold font.}
    \label{Difference}
\end{figure}
\vspace{-1em}
We propose \textbf{\texttt{HCSFed}}, a novel \underline{\textbf{H}}ybrid  \underline{\textbf{C}}lustering-based client \underline{\textbf{S}}election scheme for heterogeneous \underline{\textbf{Fed}}erated learning that further accelerates the FL convergence.
To improve the \textbf{poor clustering effect}, different from using the high dimension gradient, we adopt the compressed gradient as the cluster feature.
The different components of the gradient are grouped based on their numerical value, only the center of the group are retained to form the compressed gradient.
Adopting the compressed gradient as cluster feature not only allows better clustering effect but also is communication-efficient, since the compression operation filters out the redundant information within each gradient.
Besides, to control the \textbf{effect fluctuation}, we reset the number of clients to be selected in each cluster based on the client similarity and the size of the cluster, which we term as sample size re-allocation.
As shown in Figure~\ref{Difference}\gdhBlue{(c)}, sample size re-allocation scheme makes the cluster with low similarity have more number of clients to be selected.
The sampling effect becomes stable because the poor effect of random sampling in the clusters with low similarity is greatly mitigated by sample size re-allocation scheme.
Theoretically, we demonstrate the improvement of the proposed scheme in variance reduction.
Extensive experiments show the exceed efficiency of the proposed scheme compared to the cutting-edge FL client selection criteria in both convex and non-convex settings.
\section{System Setup and Prior Work}
\paragraph{The Federated Learning Optimization Settings.}
In federated learning, a total number of $N$ clients aim to jointly solve the following distributed optimization model:
\begin{eqnarray} \label{eq1}
\min_{\mathbf{w}\in \mathbb{R}^{d}}\left\{F(\mathbf{w}):=\sum^{N}_{k=1}\omega_k \left(F_k(\mathbf{w}):=\frac{1}{n_k}\sum^{n_k}_{j=1}f(\mathbf{w};x_{k,j})\right)\right\},
\end{eqnarray}
where $\omega_k$ is the weight of the $k^{th}$ client, $s.t.~\omega_k \ge 0$ and $\sum_{k=1}^N\omega_k=1$. 
Suppose the $k^{th}$ client possesses $n_k$ training data: $\{x_{k,1},~x_{k,2},~...~,~x_{k,n_k}\}$.
The local objective function $F_k(\mathbf{w})$ is defined as the average of $f(\mathbf{w};x_{k,j})$,  which is a user-specified loss function (possibly non-convex) made with model parameters $\mathbf{w}\in \mathbb{R}^{d}$ and training data $x_{k,j}$.
\paragraph{\texttt{FedAvg} Description.}
In the $t^{th}$ train round of the \texttt{FedAvg}, the server broadcasts the latest global model $\mathbf{w}_{t}$ to all the clients. 
The $k^{th}$ client sets  $\mathbf{w}_{t}^{k} \leftarrow \mathbf{w}_{t}$ and performs local training for $E$ epochs:
\begin{equation} \label{eq2}
    \mathbf{w}_{t+1}^{k} \longleftarrow \mathbf{w}_{t}^{k}-\eta_{t} \nabla F_{k}\left(\mathbf{w}_{t}^{k}, \xi_{t}^{k}\right),
\end{equation}
where $\eta_{t}$ is the learning rate and $\xi_{t}^{k}$ is a data sample uniformly selected from the local data.
In the communication round, the server aggregates the local model update to produce the new global model $\mathbf{w}_{t+E}$. 
In fact, to reduce the communication cost, \texttt{FedAvg} randomly generates a subset $\mathcal{S}_t$ consisting of $m$ clients from the entire client set and aggregates the model update, $\{\mathbf{w}_{t+E}^{1}, \mathbf{w}_{t+E}^{2}, \cdots, \mathbf{w}_{t+E}^{m}\}$, 
\begin{equation}
    \mathbf{w}_{t+E} \longleftarrow \underbrace{\sum_{k=1}^{N} \omega_{k} \mathbf{w}_{t+E}^{k}}_{\mathrm{Full~Participate}}\longleftarrow \underbrace{\sum_{k \in \mathcal {S}_{t}} \frac{N}{m}\omega_{k} \mathbf{w}_{t+E}^{k}}_{\mathrm{Partial~Participate}}.
\end{equation}
\texttt{LAG}~\cite{chen2018lag} is a improved aggregation scheme where it triggers the reuse of outdated gradients for the non-selected clients, but the variance of the aggregated gradient is still large, which directly slows the FL convergence.
To accelerate the convergence by variance reduction, many client selection criteria have been proposed in recent literature, including importance-based and clustering-based methods.
\vspace{-1em}
\paragraph{Importance Sampling.}Importance sampling is a non-uniform client selection method, where the probabilities for clients to be chosen are proportional to their importance~\cite{katharopoulos2018not,zhao2015stochastic}.
Different methods to measure the importance are proposed, including data variability~\cite{rizk2021optimal}, norm of update~\cite{chen2020optimal}, test accuracy~\cite{mohammed2020budgeted}, local rounds~\cite{singh2019sparq}, and local loss~\cite{cho2020client}.
Such methods can yield better selection by activating the clients with more importance.
However, they could not capture the similarities among the updates of clients and, therefore, trigger the communication redundancy.
\vspace{-1em}
\paragraph{Cluster Sampling.}To make better use of similarities among clients, some researchers propose raw gradient-based cluster sampling schemes that group the clients with similar gradients together~\cite{pmlr-v139-fraboni21a,muhammad2020fedfast}.
However, the effect of cluster sampling strongly relies on the performance of the clustering process.
Unfortunately, the high dimension gradient of a client is too complicated to be an appropriate cluster feature, which can not bring a good clustering effect.
Besides, meticulous one-by-one client selection~\cite{balakrishnan2022diverse,dieuleveut2021federated} is introduced to FL. It could achieve better selection outcomes but with extra expensive computation and communication costs for solving each noncommittal submodular maximization.
\vspace{-1em}
\paragraph{Difference from \texttt{SCAFFOLD}.}
While \texttt{SCAFFOLD}~\cite{karimireddy2020scaffold} appears to be similar to our method, there are fundamental differences.
\texttt{SCAFFOLD} augments the updates with extra “control-variate” item that is also transmitted along with the updates to reduce the variance, while we reduce the variance by selecting a more representative subset. 
The former focuses on update control, while the latter focuses on better update selection, indicating that they are orthogonal and compatible with each other.

\section{\texttt{HCSFed}: Proposed Client Selection Scheme in Federated Learning} 
\subsection{Analysis for the Optimization Objective of Client Selection Scheme}\label{sec31}
Inspired by the convergence analysis work~\cite{katharopoulos2018not}, we define the convergence speed as follows,
\begin{equation}\label{eq4}
    \begin{split}
       \mathrm{Speed}\triangleq&-\mathbb{E}\left[\left\|\mathbf{w}_{t+1}-\mathbf{w}^{*}\right\|_{2}^{2}-\left\|\mathbf{w}_{t}-\mathbf{w}^{*}\right\|_{2}^{2}\right]\\
        =&-\mathbb{E}\left[\left(\mathbf{w}_{t}-\eta_{t} \nabla F_{k}\right)^{T}\left(\mathbf{w}_{t}-\eta_{t} \nabla F_{k}\right)+2 \eta_{t} \nabla F_{k}^{T} \mathbf{w}^{*}-\mathbf{w}_{t}^{T} \mathbf{w}_{t}\right] \\
        =& \underbrace{\ 2\eta_{t}\left(\mathbf{w}_{t}-\mathbf{w}^{*}\right) \mathbb{E}\left[G_{t}\right]-\eta_{t}^{2} \mathbb{E}\left[G_{t}\right]^{T} \mathbb{E}\left[G_{t}\right]}_{Constant}-
        \underbrace{\eta_{t}^{2} \operatorname{Tr}\left(\mathbb{V}\left[G_{t}\right]\right)}_{Changeable},
    \end{split}
\end{equation}
where $G_{t}=\nabla F\left(\mathbf{w}_{t}, \mathcal{S}_{t}\right)$ and $\mathbb{E}\left[G_{t}\right]=\sum_{k=1}^{N} \nabla F_{k}\left(\mathbf{w}_{t}^{k}, x^{k}\right)$ is the unbiased estimate of the true update.
The first two terms are constant, since $\mathbb{E}\left[G_{t}\right]$ has a constant value. 
Therefore, gaining a convergence speedup is equivalent to reducing the variance of gradient, i.e., $\mathbb{V}\left[{G}_{t}\right]$.
Consider a partial participation FL~\cite{li2020federated}, selecting $m$ clients can be regarded as selecting one client $m$ times by $m$ different discrete probability distributions $\left\{P_i\right\}_{i=1}^{m}$. 
For each distribution, we have $P_{i} = \left\{ p_{k}^{i}\right\}_{k=1}^N$, where $p_{k}^{i}$ denotes the probability for the $k^{th}$ client to be selected in $i^{th}$ distribution. 
For faster convergence, we seek to minimize the variance of model update $\mathbf{w}(\mathcal{S}_{t})$ aggregated from the subset that is equivalent to minimizing the variance of gradient corresponding to the following equation:
\begin{equation}
    \min_{p_{k}^{i}}\left\{
    \mathbb{V}\left[{G}_{t}\right]=\mathbb{V}\left[ \mathbf{w}(\mathcal{S}_t) \right]  \right\}, s.t., \sum_{k=1}^{N} p_{k}^{i}=1 \text { with } p_{k}^{i} \geq 0~,~\left\Vert \mathcal{S}_t \right\Vert=q \cdot N=m,
\end{equation}
where we use $q$ and $N$ to denote the sampling ratio and the total number of all clients, respectively.

\subsection{Proposed Client Selection Scheme \texttt{HCSFed} for Variance Reduction}
How to explore and make use of the similarities among clients with acceptable resource costs is the key issue in constructing client selection criteria.
We propose \texttt{HCSFed}, a client selection scheme that exploits three components (cluster sampling, sample size re-allocation, importance sampling) guaranteeing convergence speedup by variance reduction.
\begin{wrapfigure}[18]{R}{0.50\textwidth}
	\flushright  
	 \vspace{-2.2em}  
\begin{minipage}{1\linewidth}  
\SetInd{0.3em}{0.4em}  
\IncMargin{1em} 
\begin{algorithm}[H]
\caption{ClientClustering}
\label{alg:algorithm1}
\KwIn{compressed gradient of all the clients in the $t^{th}$ round $\{X_{t}^{k}\}_{k=1}^{N}$}
\KwIn{The number of the clusters $\mathrm{H}$}
\textbf{Initialize} Randomly select H clients as cluster centers $\{\boldsymbol{\mu}_{1},\boldsymbol{\mu}_{2},\ldots, \boldsymbol{\mu}_{\mathrm{H}}\}$\;
\textbf{Initialize} $C_{i}=\varnothing\ (1 \leq i \leq \mathrm{H})$\;
\Repeat{$\forall\ i = \{1,2,\ldots, \mathrm{H}\}, \boldsymbol{\mu}_{i}^{\prime}=\boldsymbol{\mu}_{i}$}
{\For{each client k=1,2,\ldots,N}
{$\lambda_{k} = \arg \min _{i \in \{1,2,\ldots,\mathrm{H}\}} \left\|{X}_{t}^{k}-\boldsymbol{\mu}_{i}\right\|_{2}$\; $C_{\lambda_{k}}=C_{\lambda_{k}}\bigcup\left\{k^{th}~\mathrm{client~with}~X_{t}^{k}\right\}$\;
}
\For{each cluster i=1,2,\ldots,$\mathrm{H}$}
{New center $\boldsymbol{\mu}_{i}^{\prime}=\frac{1}{\left|C_{i}\right|} \sum_{{X_{t}^{k}} \in C_{i}}{X_{t}^{k}}$\;
{$\boldsymbol{\mu}_{i} \leftarrow \boldsymbol{\mu}_{i}^{\prime}$\;}
}
}
\KwOut{$ \mathcal{G}=\left\{C_{1}, C_{2}, \ldots, C_{\mathrm{H}}\right\}$\;}
\end{algorithm}
\end{minipage}
\end{wrapfigure}
\begin{figure}[H]
    \vspace{-1em}
    \centering
    \includegraphics[width=13cm]{./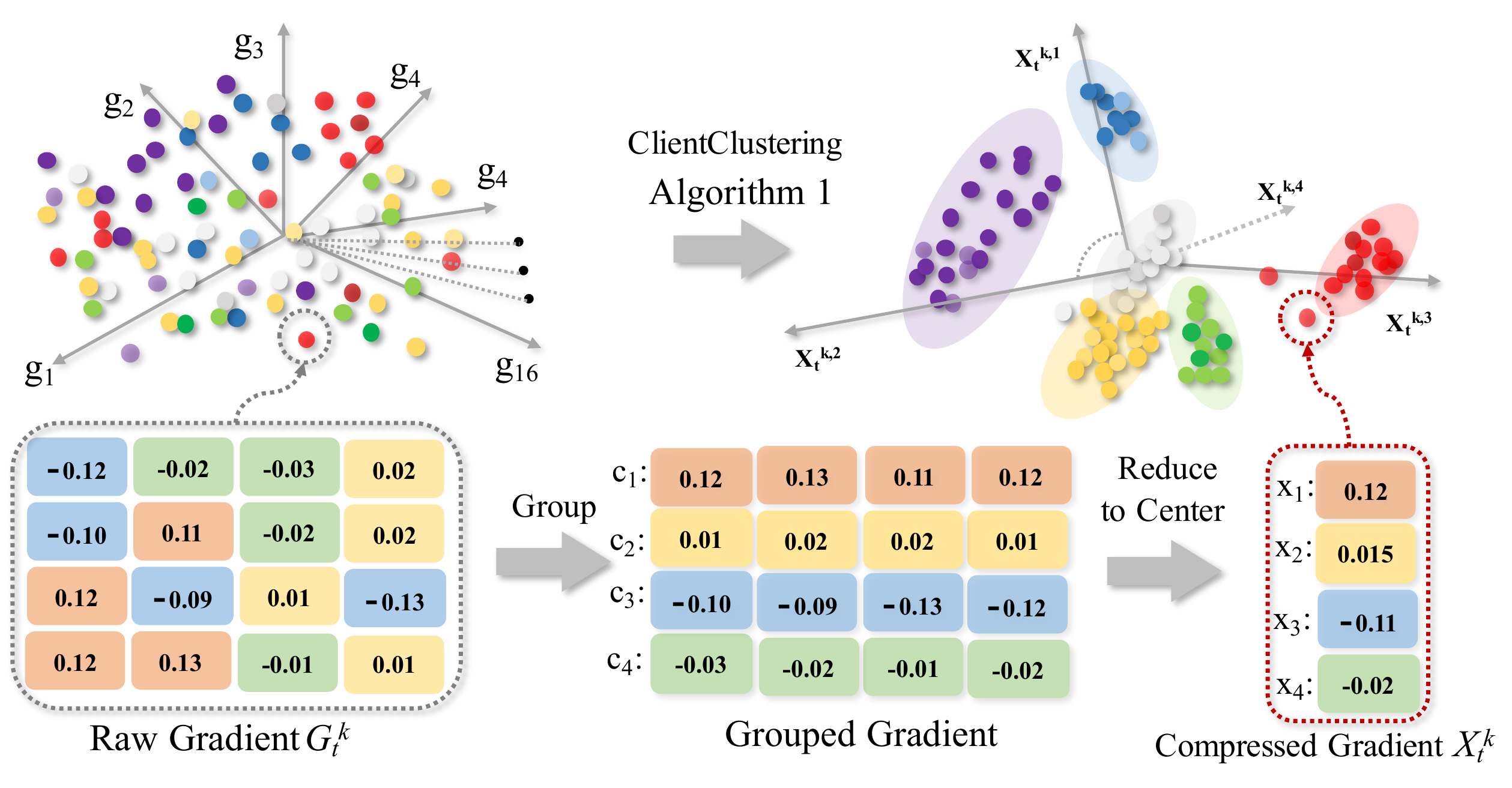}
    \caption{The visualizations of the raw gradient (left) and the clustered compressed gradient (right). Each point denotes the update gradient of one client. The $g_{i}$ is the $i^{th}$ component of the update gradient. The $c_{i}$ ci denotes the $i^{th}$ group of component. }
    \label{fig1}
\end{figure}
\vspace{-1em}
\paragraph{The Compressed Gradient-Based Cluster Selection.} 
To roughly explore and make use of the similarity among clients, we develop cluster selection that groups the similar clients together based on the client characteristic.
The following is a mathematical description of cluster selection. 
Generating a subset $\mathcal{S}_t$ consisting of $m$ clients by clustering selection can be regarded as selecting $m$ clients using $\mathrm{H}$ different probability distributions $\left\{P_h \right\}_{h=1}^\mathrm{H}$~where~$P_h = \left\{p_{k}^{h} \right\}_{k=1}^N$.
For example, if we select a client from the $h^{th}$ cluster, the probability distribution $P_h$ will be used, and the $p_{k}^{h}$ comes as follows:
\begin{equation} \label{eq5}
        p_{k}^{h} =
\begin{cases}
0, & \text{if }k^{th} \text{ client} \notin h^{th} \text{ cluster} \\
\frac{1}{N_{h}}, & \text{if }k^{th} \text{ client} \in h^{th} \text{ cluster}
\end{cases}
\end{equation}
where $N_h$ denotes the number of clients in the $h^{th}$ cluster $s.t. \sum_{h=1}^\mathrm{H} N_h = N$. 
Prior cluster selections~\cite{pmlr-v139-fraboni21a} prefer to develop clustering based on the raw high dimension gradient (shown in Figure~\ref{fig1}), which is too complicated to distinguish clients and brings unacceptable communication costs running against the original objective.
Different from the previous raw gradient-based clustering methods, we develop clustering based on compressed gradient \cite{han2015deep} which is the information-preserving representation of the model update. 
As shown in Figure~\ref{fig1}, different components of the gradient are grouped based
on their numerical value, only the center of the group are retained to form the compressed gradient.
The compression rate $R$ can be obtained by evaluating the quotient of $d$ and $d'$, i.e., $\frac{d'}{d}$, where $d'$ and $d$ are the total dimension of the compressed gradient and raw gradient, respectively.
Adopting the compressed gradient as cluster feature not only allows better clustering effect but also is
communication-efficient, since the compression operation filters out the redundant information within
each gradient.
%
%
The pseudo-code of Gradient Compress (GC) can be found in Appendix \ref{app:A.2}.
Algorithm \ref{alg:algorithm1} summarizes the main steps of client clustering.
Ideally, after clustering, the clients in the same cluster are similar to each other.
However, as shown in Figure~\ref{fig1}, some clusters always exist with low similarity (purple cluster), since the effect of the clustering process fluctuates, which may also lead to performance degradation.

\begin{wrapfigure}[25]{R}{0.51\textwidth}
	\flushright  
	 \vspace{-2em}  
\begin{minipage}{1\linewidth}  
\SetInd{0.3em}{0.4em}  
\IncMargin{1em} 
\begin{algorithm}[H]
\caption{\texttt{HCSFed}}
\label{alg:algorithm2}
\KwIn{updates in the $t^{th}$ round of all clients $\{G_{t}^{k}\}_{k=1}^{N}$}
\textbf{Initialize} $\mathbf{w}_{0}$\;
\textbf{Initialize} $\mathcal{S}_{t}=\varnothing$\;
\textbf{Server executes}:\\
\For{each round $t = 1,2, \cdots$}
{$m \leftarrow \max(qN,1)$\;
$\mathcal{G} \leftarrow \text {ClientClustering}\left(\mathbf{X}_{t},\mathrm{H}\right)$\;
\For{each cluster $h=1,2,\cdots,\mathrm{H}$}
{$m_h \leftarrow \frac{N_{h} S_{h}}{\sum_{h=1}^\mathrm{H} N_{h} S_{h}} \cdot m$\;
%
compute $\{p_{t}^{k}\}_{k \in N_{h}}$ using Equation \ref{eq7}\;
$\mathcal{S}_{t} = \mathcal{S}_{t} \bigcup m_{h}$ clients selected with $P_{h}$\;
}
\For{client $\in \mathcal{S}_{t}$ \textbf{in parallel}}
{$\mathbf{w}_{t+1}^k \leftarrow\;
\operatorname{ClientUpdate}(k,\mathbf{w}_{t}^{k})$\;
}
$\mathbf{w}_{t+1} \leftarrow \sum_{{k} \in \mathcal{S}_{t}} \frac{N}{m} \omega_{k} \mathbf{w}_{t+1}^{k}$\;
}
\textbf{ClientUpdate($k, \mathbf{w}_{t}^{k}$)}: \\
\For{each local epoch $i$ from $1\ \text{to}\ E$}
{$\mathbf{w}_{t+i+1}^{k}\longleftarrow\mathbf{w}_{t+i}^{k}-\eta \nabla F_{k}\left(\mathbf{w}_{t+i}^{k}, \xi_{t+i}^{k}\right)$\;}
\eIf{$k \in \mathcal{S}_{t}$}{\textbf{return} $\mathbf{w}^k_{t+1} \leftarrow \mathbf{w}^k_{t+E}$,
$X_{t}^{k} \leftarrow \mathrm{GC}(G_{t}^{k})$\;}
{\textbf{return} $X_{t}^{k} \leftarrow \mathrm{GC}(G_{t}^{k})$\;}
\end{algorithm}
\end{minipage}
\end{wrapfigure}

\paragraph{Sample Size Re-allocation Scheme.}Concentrating on the diverse heterogeneity of clusters, we make the first attempt to develop a sample size re-allocation scheme that pays more attention to the cluster with great heterogeneity. 
\texttt{HCSFed} redetermines the sample size $m_{h}$, namely, the number of clients sampled from the $h^{th}$ cluster, by considering both the cluster size $N_h$ and variability. 
%
\begin{equation}
    \begin{split}
        m_h=&\frac{N_{h} S_{h}}{\sum_{h=1}^\mathrm{H} N_{h} S_{h}} \cdot m, \\
        \mathrm{where}~~S_h =&\frac{1}{N_h-1}\sum_{j=1, i \neq j}^{N_h} \left\| X_t^{i}-X_{t}^{j} \right\|_{2}^{2},
    \end{split}
  \label{eq6}
\end{equation}
$m_h$ denotes the number of clients in the subset $\mathcal{S}_t$ from the $h^{th}$ cluster $s.t. \sum_{h=1}^\mathrm{H}m_h=m.$
$S_h$ denotes the variability of the $h^{th}$ cluster where we use Cluster Cohesion based on the compressed model update to approximate.
The introduction of sample size re-allocation can further reduce the variance by assigning more sample chance to those clusters with greater heterogeneity.
Due to the limited number of clients to be selected, sometimes some clusters with great heterogeneity still can not have an adequate sample chance, which indeed need extra care. 
\paragraph{Importance Selection.}
We introduce importance selection to optimize the probabilities for clients to be selected in the same cluster, which is based on the norm of the compressed gradient, i.e., $\mathrm{GC}(G_{t}^k)=X_{t}^{k}$.
We re-determine the probability for each client to be sampled in $t^{th}$ round as follows, 
\begin{equation} \label{eq7}
    p_{t}^{k} = \frac{\left\|\mathrm{GC}(G_{t}^k)\right\|}{\sum_{k=1}^{N_{h}}\left\|\mathrm{GC}(G_{t}^k)\right\|} = \frac{\left\|X_{t}^k\right\|}{\sum_{k=1}^{N_{h}}\left\|X_{t}^k\right\|},
    \text{if}\ k^{th}\ \text{client} \in h^{th}\ \text{cluster in}~t^{th}~\text{round}.
\end{equation}
After completing the importance selection, the client with a higher norm of gradient (more importance) will have a higher probability to be sampled, which brings representative outcomes under an inadequate sampling ratio.
Importance selection provides a fine-grained optimization over cluster selection with sample size re-allocation, i.e., assigning more attention to the more representative clients.

These three ideas of selection discussed in this subsection together constitute the \texttt{HCSFed} scheme to optimize the client selection in FL. 
\texttt{HCSFed} selects representative clients via allocating appropriate probability for each client to be selected, with the promise of reducing the variance between the model update aggregated from the sampled subset $\mathcal{S}_{t}$ and the entire set $\mathcal{K}$. 
As we mentioned in Section \ref{sec31}, the reduced variance leads to faster convergence.
Therefore, \texttt{HCSFed} can achieve communication cost reduction, as it requires fewer rounds to approach the target accuracy.
Algorithm~\ref{alg:algorithm2} summarizes the main steps of \texttt{HCSFed}.
\section{Theoretical Analysis}\label{sec4}
\subsection{Variance Relationship among Different Selection Schemes}
\begin{theorem}[Variance Reduction]\label{th1}
If the population is large compared to the subset, $\frac{m}{N}$, $\frac{m_h}{N_h}$, $\frac{1}{m_h}$ and $\frac{1}{N}$ are negligible, then the selection (or cross-client) variance of different selection schemes satisfies
$$
\mathbb{V}\left({\mathbf{w}}_{hybrid}\right) \leq \mathbb{V}\left({\mathbf{w}}_{cludiv}\right) \leq \mathbb{V}\left({\mathbf{w}}_{cluster}\right) \leq \mathbb{V}\left({\mathbf{w}}_{rand}\right),
$$
where $\mathbf{w}_{hybrid},\mathbf{w}_{cludiv},\mathbf{w}_{cluster},\mathbf{w}_{rand}$ denote the model update aggregated from the subset $\mathcal{S}_t$ that generated by \texttt{HCSFed}, clustering selection scheme under re-allocation, clustering selection scheme under plain allocation and simple random selection scheme, respectively.
\end{theorem}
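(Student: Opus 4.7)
The plan is to decompose the chain of three inequalities and treat each as a statement about the variance of an unbiased Horvitz--Thompson-type estimator of $\sum_k\omega_k\mathbf{w}_{t+E}^k$ under a different finite-population sampling design. Since the theorem compares traces of covariance matrices, I would argue coordinate-wise (or, for the importance-sampling step, via squared norms) and thereby reduce to the classical scalar survey-sampling setting, in which the four schemes correspond respectively to simple random sampling, stratified sampling with proportional allocation $m_h=\tfrac{N_h}{N}m$, stratified sampling with Neyman-type allocation $m_h\propto N_hS_h$, and stratified sampling with an additional within-stratum importance weighting.

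For the first two inequalities I would invoke two textbook identities. To get $\mathbb{V}(\mathbf{w}_{cluster})\leq\mathbb{V}(\mathbf{w}_{rand})$ I would combine the simple-random-sampling variance $\tfrac{S^{2}}{m}\bigl(1-\tfrac{m}{N}\bigr)$ with the proportional-allocation stratified variance $\sum_h\tfrac{N_h}{N}\tfrac{S_h^{2}}{m_h}\bigl(1-\tfrac{m_h}{N_h}\bigr)$ via the ANOVA identity $(N-1)S^{2}=\sum_h(N_h-1)S_h^{2}+\sum_h N_h(\bar y_h-\bar y)^{2}$; once the finite-population corrections vanish under the stated asymptotic regime, the two variances differ by the non-negative between-cluster term $\tfrac{1}{m}\sum_h\tfrac{N_h}{N}(\bar y_h-\bar y)^{2}$. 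To get $\mathbb{V}(\mathbf{w}_{cludiv})\leq\mathbb{V}(\mathbf{w}_{cluster})$, both estimators are stratified and differ only in the allocation $\{m_h\}$ with $\sum_h m_h=m$; the Cauchy--Schwarz inequality $\bigl(\sum_h N_hS_h\bigr)^{2}\leq\bigl(\sum_h m_h\bigr)\bigl(\sum_h\tfrac{N_h^{2}S_h^{2}}{m_h}\bigr)$ then pins down the minimiser as Neyman's $m_h\propto N_hS_h$, which is exactly \texttt{HCSFed}'s allocation in Equation~(7). Proportional allocation is another feasible point of the same optimisation and therefore cannot beat the minimiser.

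For the third inequality I would zoom inside a single cluster and compare uniform sampling against importance sampling with $p_t^k\propto\|X_t^k\|$. Writing the within-cluster estimator as $\tfrac{1}{m_h}\sum_{i=1}^{m_h}\tfrac{X_t^{k_i}}{p_t^{k_i}}$, its trace-variance has the form $\tfrac{1}{m_h}\bigl(\sum_k\tfrac{\|X_t^k\|^{2}}{p_t^k}-\bigl\|\sum_k X_t^k\bigr\|^{2}\bigr)$, and another Cauchy--Schwarz step $\bigl(\sum_k\|X_t^k\|\bigr)^{2}\leq\bigl(\sum_k p_t^k\bigr)\bigl(\sum_k\tfrac{\|X_t^k\|^{2}}{p_t^k}\bigr)$ identifies the minimiser as exactly the weighting chosen in Equation~(8). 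Uniform sampling $p_t^k=1/N_h$ is a feasible but in general suboptimal point, so summing the per-cluster inequality over the $H$ clusters delivers the final step.

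The main obstacle will be bookkeeping rather than cleverness: I need to (i) confirm that each of the four schemes is unbiased for the same target, so that comparisons of second moments genuinely compare variances; (ii) match the paper's cluster-cohesion surrogate $S_h$ of Equation~(7) against the within-stratum variance used by Neyman's theorem, noting that only the product $N_hS_h$ enters the Cauchy--Schwarz step so any monotone rescaling is harmless; and (iii) verify that the asymptotic regime in which $\tfrac{m}{N}$, $\tfrac{m_h}{N_h}$, $\tfrac{1}{m_h}$ and $\tfrac{1}{N}$ are negligible is exactly what lets me discard the finite-population correction factors uniformly across all three comparisons.
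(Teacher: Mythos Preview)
Your plan is correct and follows the same survey-sampling template as the paper: ANOVA decomposition for random versus proportional-stratified, Neyman allocation for the re-allocation step, and norm-proportional importance sampling within each cluster. The only stylistic difference is that the paper explicitly computes each variance and exhibits the non-negative gap (e.g.\ $\mathbb{V}(\mathbf{w}_{cluster})-\mathbb{V}(\mathbf{w}_{cludiv})=\tfrac{1}{mN}\sum_h N_h(S_h-\bar S)^2$ and, within a cluster, $\mathbb{V}_{rand}-\mathbb{V}_{import}=\tfrac{(\sum_i\|G_i\|)^2}{N_h}\sum_i(I_i-\tfrac{1}{N_h})^2$), whereas you invoke Cauchy--Schwarz to identify the minimiser and then note that the competing scheme is merely another feasible point. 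Your route is a touch more conceptual; the paper's yields the explicit differences that appear in its proof sketch (Equations~\ref{eq9}--\ref{eq11}).

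One caution on your bookkeeping point (ii): the claim that ``any monotone rescaling is harmless'' when matching the cohesion surrogate $S_h$ of Equation~(7) to the within-stratum standard deviation is not correct. Neyman optimality requires $m_h\propto N_hS_h$ with $S_h$ the \emph{true} within-cluster standard deviation; replacing $S_h$ by a monotone transform generally changes the allocation and need not dominate proportional allocation. The paper handles this by silently redefining $S_h$ in the appendix as the genuine quantity $\bigl(\sum_i\|\mathbf{w}_{h_i}-\mathbf{W}_h\|_2^2/(N_h-1)\bigr)^{1/2}$; you should adopt that definition for the proof rather than rely on the rescaling remark.
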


\paragraph{Proof Sketch.} 
%
Below we provide a proof sketch to reveal the relationship among the cross-client variance of different client selection schemes, and then we naturally state when our \texttt{HCSFed} scheme could achieve variance reduction. We defer the details of proof to Appendix \ref{app:B}.

\textbf{Comparison of Random and Clustering Selection.}
We derive the Equation~\ref{eq9} to show the relationship between $\mathbb{V}\left({\mathbf{w}}_{cluster}\right)$ and $\mathbb{V}\left({\mathbf{w}}_{rand}\right)$:
\begin{equation}\label{eq9}
\mathbb{V}\left({\mathbf{w}}_{rand}\right) = \mathbb{V}\left({\mathbf{w}}_{cluster}\right) + \sum_{h=1}^{\mathrm{H}} \frac{N_h\left\|\mathbf{W}_{h} - \mathbf{W}(\mathcal{K})\right\|_{2}^2}{mN},
\end{equation}
where $\mathcal{K}$ denotes the set of all the clients. We have $\mathbb{V}\left(\mathbf{w}_{cluster}\right)<\mathbb{V}\left(\mathbf{w}_{rand}\right)$, unless $\forall h \in \{1, \cdots, \mathrm{H}\}, \mathbf{W}_{h} = \mathbf{W}(\mathcal{K}) $, i.e.,  each cluster has the same averaged model update with the entire population, which indicates that all the clusters are homogeneous in terms of the mean model update.

\textbf{Comparison of Plain Clustering and Clustering with Re-allocation.}
We derive the Equation~\ref{eq10} to show the relationship between $\mathbb{V}\left({\mathbf{w}}_{cludiv}\right)$ and $\mathbb{V}\left({\mathbf{w}}_{cluster}\right)$:
\begin{equation}\label{eq10}
\mathbb{V}\left({\mathbf{w}}_{cluster}\right) = \mathbb{V}\left({\mathbf{w}}_{cludiv}\right) + \sum_{h=1}^{\mathrm{H}} \frac{N_h({S_h} - {S})^2}{mN}.
\end{equation}
We have $\mathbb{V}\left(\mathbf{w}_{cludiv}\right) < \mathbb{V}\left(\mathbf{w}_{cluster}\right)$, unless $\forall h\in \{1, \cdots, \mathrm{H}\}, S_h={S}$, i.e., each cluster has equal variability, which indicates that all the clusters are homogeneous in terms of the variability.

\textbf{Comparison of Clustering with Re-allocation and \texttt{HCSFed}.}
We derive the Equation~\ref{eq11} to show the relationship between $\mathbb{V}\left({\mathbf{w}}_{cludiv}\right)$ and $\mathbb{V}\left({\mathbf{w}}_{hybrid}\right)$:
\begin{equation}\label{eq11}
\mathbb{V}\left({\mathbf{w}}_{cludiv}\right) = \mathbb{V}\left({\mathbf{w}}_{hybrid}\right) +\frac{\left(\sum_{i=1}^{N_{h}}\left\|G_{i}\right\|_{2}\right)^{2}}{N_{h}} \sum_{i=1}^{N_{h}}\left(I_{i}-\frac{1}{N_{h}}\right)^{2}.
\end{equation}
We have $\mathbb{V}\left(\mathbf{w}_{hybrid}\right) < \mathbb{V}\left(\mathbf{w}_{cludiv}\right)$, unless $\forall i\in \{1, \cdots, N_h\}, I_{i}=\frac{\left\|G_{i}\right\|_{2}}{\sum_{i=1}^{N_{h}} \left\|G_{i}\right\|_{2}}=\frac{1}{N_{h}}$, i.e., each client in $h^{th}$ cluster has equal norm of model update.
\begin{remark}
    Theorem \ref{th1} verifies the capability of \texttt{HCSFed} in variance reduction.
    The proof sketch indicates that the variance difference of different selections will vanish only if all the clusters and the norm of the gradients are identical.
    The proposed approach can achieve variance reduction, since each cluster never has identical update mean, variability, and norm in heterogeneous FL.
\end{remark}

\subsection{Convergence Behavior of \texttt{HCSFed}} 
%
We emphasize that the convergence analysis of \texttt{FedAvg} with random selection is \textbf{NOT} our contribution, we just follow the previous analysis work~\cite{li2019convergence} to verify that \texttt{HCSFed} could maintain the same convergence guarantees as a random selection in the worst case and in most cases, \texttt{HCSFed} will have tighter convergence compared with random selection.

We next state the assumptions used in our theorem and proof, which are common in the federated optimization literature, e.g.,~\cite{zhou2017convergence,li2019convergence,wang2021cooperative,stich2018local,haddadpour2019convergence,haddadpour2019local}.
Assume each function $F_{k}(k \in[N])$ is $\mu$-strongly convex and $L$-smooth. 
Suppose that for all $k \in[N]$ and all $t$, the variance and expectation of stochastic gradients in each client on random samples $\xi$ are bounded by $\sigma_{k}^{2}$ and $G^2$, i.e., $\mathbb{E}\left[\left\|\nabla F_{k}\left(\mathbf{w}_{t}^{k}, \xi\right)-\nabla F_{k}\left(\mathbf{w}_{t}^{k}\right)\right\|_{2}^{2}\right] \leq \sigma_{k}^{2} $ and $\mathbb{E}\left[\left\|\nabla F_{k}^{2}\left(\mathbf{w}_{t}^{k}, \xi\right)\right\|_{2}^{2}\right] \leq G^{2}$, respectively. And $\Gamma$ is used to quantify the heterogeneity, where $\Gamma=F^{*}-\sum_{k=1}^{N} p_{k} F_{k}^{*}$.
\begin{theorem}[Convergence Bound]\label{th2}
Let Assumptions above hold and $L,\mu,\sigma_{k},G$ be defined therein. Consider \texttt{FedAvg} when sampling $m$ clients, then \texttt{HCSFed} satisfies:
\begin{align}
    \mathbb{E}\left[F\left(\mathbf{w}_{T}\right)\right]-F^{*} \leq \underbrace{\mathcal{O}\left(\frac{\sum_{k=1}^{N} p_{k}^{2} \sigma_{k}^{2}+ E^{2} G^{2}+\gamma G^{2}}{\mu T}\right)
    + \mathcal{O}\left(\frac{L \Gamma}{\mu T}\right)}_{Bounding~\texttt{FedAvg}~with~Full~Participation} +\underbrace{\mathcal{O}\left(\frac{ E^{2} G^{2}}{m\mu T}\right)}_{Bounding~Variance}.
\end{align}
\end{theorem}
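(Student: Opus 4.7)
The plan is to follow the virtual sequence framework of Li et al.~(2019) and splice in the sharper sampling variance supplied by Theorem~\ref{th1}. Define the full-participation virtual iterate $\bar{\mathbf{w}}_t = \sum_{k=1}^{N} p_k \mathbf{w}_t^k$ and the HCSFed partial-participation iterate $\mathbf{w}_t = \sum_{k\in \mathcal{S}_t}\frac{N}{m}\omega_k \mathbf{w}_t^k$. The key preliminary step is to verify that HCSFed is unbiased, i.e., $\mathbb{E}[\mathbf{w}_t\mid \bar{\mathbf{w}}_t] = \bar{\mathbf{w}}_t$; this is immediate after re-checking that the re-allocated per-cluster sample sizes $m_h$ together with the intra-cluster importance weights in Equation~\ref{eq7} preserve the $\omega_k$ marginals. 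With unbiasedness in hand, the cross term vanishes and the squared error decomposes as
\begin{equation*}
\mathbb{E}\|\mathbf{w}_{t+1}-\mathbf{w}^*\|_2^2 = \mathbb{E}\|\bar{\mathbf{w}}_{t+1}-\mathbf{w}^*\|_2^2 + \mathbb{E}\|\mathbf{w}_{t+1}-\bar{\mathbf{w}}_{t+1}\|_2^2,
\end{equation*}
cleanly separating the full-participation SGD error from the sampling variance.

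For the first piece, I would replicate the per-round contraction of Li et al.\ using $\mu$-strong convexity to extract a $(1-\eta_t\mu)$ factor and $L$-smoothness to absorb the gradient-step residual. The local-drift contribution over the $E$ inner epochs is controlled by the standard telescoping argument: iterating the bounded-gradient assumption $\mathbb{E}\|\nabla F_k(\mathbf{w}_t^k,\xi)\|_2^2 \leq G^2$ yields $\mathbb{E}\|\mathbf{w}_t^k - \bar{\mathbf{w}}_t\|_2^2 \leq 4\eta_t^2 (E-1)^2 G^2$. Combining this with the stochastic-gradient noise term $\sum_k p_k^2 \sigma_k^2$ and the heterogeneity penalty $L\Gamma$ (introduced via $\Gamma = F^*-\sum_k p_k F_k^*$) produces the first two $\mathcal{O}$ groups in the stated bound, identical in form to the full-participation result of Li et al.~(2019).

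For the second piece, the sampling variance, I would appeal directly to Theorem~\ref{th1}: the HCSFed sampler satisfies $\mathbb{V}(\mathbf{w}_{hybrid}) \leq \mathbb{V}(\mathbf{w}_{rand})$, and Li et al.\ already bound the random-sampling variance by $\mathcal{O}(\eta_t^2 E^2 G^2/m)$. Therefore HCSFed inherits at least this bound, yielding the final $\mathcal{O}(E^2 G^2/(m\mu T))$ term. The three-layer improvement chain in the proof sketch of Theorem~\ref{th1} (random $\to$ plain cluster $\to$ re-allocation $\to$ hybrid) only strengthens this constant, which is why the theorem is stated as an upper bound rather than a strict inequality. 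Unrolling the resulting recurrence $\mathbb{E}\|\mathbf{w}_{t+1}-\mathbf{w}^*\|_2^2 \leq (1-\eta_t \mu)\mathbb{E}\|\mathbf{w}_t-\mathbf{w}^*\|_2^2 + \eta_t^2 C$ with a diminishing stepsize $\eta_t = \Theta(1/(\mu t))$ produces $\mathbb{E}\|\mathbf{w}_T-\mathbf{w}^*\|_2^2 = \mathcal{O}(C/(\mu T))$, and $L$-smoothness converts this into the claimed bound on $\mathbb{E}[F(\mathbf{w}_T)] - F^*$.

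The main obstacle will be the unbiasedness verification and, relatedly, ensuring that Theorem~\ref{th1}'s variance bound (derived under the large-population assumption $m/N$, $m_h/N_h$, $1/m_h$, $1/N$ negligible) is legitimately inserted into the convergence recurrence at every round $t$; in particular, the cluster assignment $\mathcal{G}$ changes round-to-round since it is recomputed from the current compressed gradients $X_t^k$, so strictly speaking the sampler is adaptive and one must take conditional expectations given the filtration up to round $t$ before invoking Theorem~\ref{th1}. Once these bookkeeping issues are handled, the remainder of the argument is a routine recombination of the Li et al.~template with the sharper variance constant, and the statement follows.
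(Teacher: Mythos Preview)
Your proposal is correct and mirrors the paper's own argument: the paper explicitly reduces Theorem~\ref{th2} to the Li et al.\ template via Lemmas~\ref{lemma1}--\ref{lemma3} for the full-participation terms, Lemma~\ref{lemma4} for unbiasedness, and Lemma~\ref{lemma5} for the sampling-variance term, with Theorem~\ref{th1} supplying the inequality $\mathbb{V}(\mathbf{w}_{hybrid})\le\mathbb{V}(\mathbf{w}_{rand})$ that lets \texttt{HCSFed} inherit the random-sampling bound. Your identification of unbiasedness as the one step actually requiring verification (and your caveat about the adaptive, round-dependent cluster assignment) is exactly the point the paper singles out in its remark that ``the proof [of] Lemma~\ref{lemma4} is enough to maintain the convergence bound.''
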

\begin{remark}
    Theorem \ref{th2}~\cite{li2019convergence} gives the convergence upper bound of \texttt{FedAvg} with hybrid client selection.
    The first two terms are used to bound the \texttt{FedAvg} with full client participant, while the last term is used to bound the variance of the model update aggregated from the subset. In Theorem~\ref{th1}, we demonstrate the effectiveness of \texttt{HCSFed} in variance reduction. \texttt{HCSFed} achieves a lower variance, which indicates that \texttt{HCSFed} could enjoy a tighter convergence bound.
\end{remark}
\begin{remark}
    Theorem \ref{th2}~\cite{li2019convergence} uses Lemma \ref{lemma1}-\ref{lemma3} to bound the error of \texttt{FedAvg} with full participation corresponding to the first two terms, while using Lemma \ref{lemma4} to verify the unbiased property and Lemma \ref{lemma5} to bound the variance resulting from the client selection. 
    Theorem \ref{th1} shows the effectiveness of \texttt{HCSFed} to reduce selection variance, which also indicates that the variance of \texttt{HCSFed} is always lower than the mentioned selections unless the population is homogeneous.
    Naturally, \texttt{HCSFed} satisfies the Lemma \ref{lemma5}.
    The dependency between the Lemmas and Theorem \ref{th2} indicates that the proof the Lemma \ref{lemma4} is enough to maintain the convergence bound. We defer the proof to Appendix \ref{app:A.3}.
\end{remark}
We demonstrate the convergence guarantee of \texttt{HCSFed} under $\mu$-strongly convex assumption. As for the non-convex loss function, we refer to the previous proved proposition that \texttt{FedAvg} with any unbiased selections maintains the same FL convergence bound of random selection~(Theorem 2 in \cite{pmlr-v139-fraboni21a}). 


\section{Experiment} \label{sec5}
\subsection{Experimental Setup}
We run logistic regression (convex) and a fully connected network with one hidden layer of 50 nodes (non-convex) on MNIST~\cite{lecun1998mnist}.
As for CIFAR-10~\cite{krizhevsky2009learning} and FMNIST~\cite{xiao2017fashion}, we use the same classifier of \texttt{FedAvg}~\cite{mcmahan2017communication:Lixto} composed of 3 convolutional and 2 fully connected layers.
We partition the dataset into 100 clients under both IID and non-IID data distribution.
IID data partition strategy employs the idea of random split to create a uniform federated dataset. 
As for non-IID data partition, we use Dirichlet distribution to partition the entire client set. 
Dirichlet distribution, i.e., $\operatorname{Dir}(\alpha)$, gives to each client the respective partitioning across labels by changing the value of $\alpha$. 
Specifically, the lower the value of $\alpha$, the more heterogeneous the dataset is.
%
%
The schematic table of the Dirichlet distribution is presented in Appendix \ref{app:A.1}.
As for the hyperparameters, $N$ is the number of all clients, $q$ is the sampling ratio, $nSGD$ is the times of SGD running locally, $\eta$ is the learning rate, $B$ is the batch size, $T$ is the terminate round. For better comparison, we set all the hyperparameters following the popular FL optimization work \texttt{FedProx}~\cite{li2020federated}.
\begin{table}[bp]
  \vspace{-1em}
  \centering
  \caption{Required rounds for different methods with convex model (logistic regression) to achieve 80$\%$ accuracy on non-IID MNIST and FMNIST. 200$+$ indicates 80 $\%$ accuracy was not reached after 200 rounds.}
  \setlength{\tabcolsep}{0.3mm}
  \scalebox{0.93}{
  \begin{tabular}{llrcrcrc} 
  \toprule
  \multirow{2}*{~} &\multirow{2}*{Methods} &\multicolumn{2}{c}{Sampling Ratio 10\%} &\multicolumn{2}{c}{Sampling Ratio 30\%}
  &\multicolumn{2}{c}{Sampling Ratio 50\%}\\ 
  \cmidrule(r){3-4} \cmidrule(r){5-6}  \cmidrule(r){7-8}
  ~ &~ &\makecell[l]{Num. of Rounds} &\makecell[l]{Speedup}
  &\makecell[l]{Num. of Rounds} &\makecell[l]{Speedup}
  &\makecell[l]{Num. of Rounds} &\makecell[l]{Speedup}
  \\  \midrule
  \multirowcell{6}{MNIST} &Random & 96 \begin{minipage}[b]{0.12\columnwidth}
		\centering		\raisebox{-.25\height}{\includegraphics[width=\linewidth]{./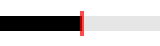}}
	\end{minipage} & \color{gray}(1.0$\times$) 
	& 42 \begin{minipage}[b]{0.12\columnwidth}
		\centering
		\raisebox{-.25\height}{\includegraphics[width=\linewidth]{./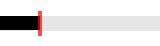}}
	\end{minipage}& \color{gray}(1.0$\times$)  
	& 28 \begin{minipage}[b]{0.12\columnwidth}
		\centering		\raisebox{-.25\height}{\includegraphics[width=\linewidth]{./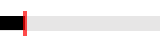}}
	\end{minipage} & \color{gray}(1.0$\times$)
	\\
	&\texttt{SCAFFOLD} & 74 \begin{minipage}[b]{0.12\columnwidth}
		\centering		\raisebox{-.25\height}{\includegraphics[width=\linewidth]{./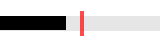}}
	\end{minipage} & \color{gray}(1.3$\times$) 
	& 36 \begin{minipage}[b]{0.12\columnwidth}
		\centering
		\raisebox{-.25\height}{\includegraphics[width=\linewidth]{./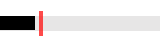}}
	\end{minipage}& \color{gray}(1.2$\times$)  
	& 21 \begin{minipage}[b]{0.12\columnwidth}
		\centering		\raisebox{-.25\height}{\includegraphics[width=\linewidth]{./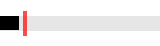}}
	\end{minipage} & \color{gray}(1.3$\times$) 
	\\
	&Importance & 35 \begin{minipage}[b]{0.12\columnwidth}
		\centering		\raisebox{-.25\height}{\includegraphics[width=\linewidth]{./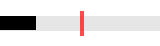}}
	\end{minipage} & \color{gray}(2.7$\times$) 
	& 24 \begin{minipage}[b]{0.12\columnwidth}
		\centering
		\raisebox{-.25\height}{\includegraphics[width=\linewidth]{./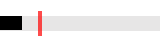}}
	\end{minipage}& \color{gray}(1.8$\times$)  
	& 22 \begin{minipage}[b]{0.12\columnwidth}
		\centering		\raisebox{-.25\height}{\includegraphics[width=\linewidth]{./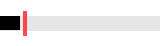}}
	\end{minipage} & \color{gray}(1.3$\times$) 
	\\
	&Cluster & 31 \begin{minipage}[b]{0.12\columnwidth}
		\centering		\raisebox{-.25\height}{\includegraphics[width=\linewidth]{./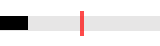}}
	\end{minipage} & \color{gray}(3.1$\times$)
	& 16 \begin{minipage}[b]{0.12\columnwidth}
		\centering
		\raisebox{-.25\height}{\includegraphics[width=\linewidth]{./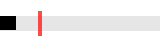}}
	\end{minipage}& \color{gray}(2.6$\times$)  
	& 8 \begin{minipage}[b]{0.12\columnwidth}
		\centering		\raisebox{-.25\height}{\includegraphics[width=\linewidth]{./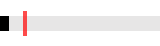}}
	\end{minipage} & \color{gray}(3.5$\times$) 
	\\
	&\textbf{\texttt{HCSFed}} & \color{red}\textbf{9} \begin{minipage}[b]{0.12\columnwidth}
		\centering		\raisebox{-.25\height}{\includegraphics[width=\linewidth]{./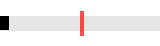}}
	\end{minipage} & \color{gray}(\textbf{10.7}$\times$) 
	& \color{red}\textbf{8} \begin{minipage}[b]{0.12\columnwidth}
		\centering
		\raisebox{-.25\height}{\includegraphics[width=\linewidth]{./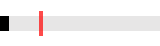}}
	\end{minipage}& \color{gray}(\textbf{5.2}$\times$)  
	& \color{red}\textbf{8} \begin{minipage}[b]{0.12\columnwidth}
		\centering		\raisebox{-.25\height}{\includegraphics[width=\linewidth]{./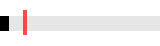}}
	\end{minipage} & \color{gray}(\textbf{3.5}$\times$) 
	\\ \cmidrule{2-8}
  \multirowcell{6}{FMNIST} &Random & 200$+$ \begin{minipage}[b]{0.12\columnwidth}
		\centering
		\raisebox{-.25\height}{\includegraphics[width=\linewidth]{./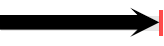}}
	\end{minipage}& \color{gray}(1.0$\times$) & 200$+$ \begin{minipage}[b]{0.12\columnwidth}
		\centering
		\raisebox{-.25\height}{\includegraphics[width=\linewidth]{./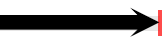}}
	\end{minipage}& \color{gray}(1.0$\times$) 
    & 158 \begin{minipage}[b]{0.12\columnwidth}
		\centering
		\raisebox{-.25\height}{\includegraphics[width=\linewidth]{./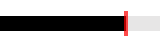}}
	\end{minipage}& \color{gray}(1.0$\times$) \\
    &\texttt{SCAFFOLD} & 180 \begin{minipage}[b]{0.12\columnwidth}
		\centering		\raisebox{-.25\height}{\includegraphics[width=\linewidth]{./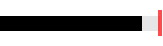}}
	\end{minipage} & \color{gray}(>1.1$\times$) 
	& 144 \begin{minipage}[b]{0.12\columnwidth}
		\centering
		\raisebox{-.25\height}{\includegraphics[width=\linewidth]{./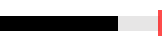}}
	\end{minipage}& \color{gray}(>1.4$\times$)  
	& 115 \begin{minipage}[b]{0.12\columnwidth}
		\centering		\raisebox{-.25\height}{\includegraphics[width=\linewidth]{./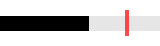}}
	\end{minipage} & \color{gray}(1.4$\times$) 
	\\
	&Importance & 180 \begin{minipage}[b]{0.12\columnwidth}
		\centering		\raisebox{-.25\height}{\includegraphics[width=\linewidth]{./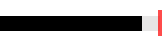}}
	\end{minipage} & \color{gray}(>1.1$\times$)
	& 116 \begin{minipage}[b]{0.12\columnwidth}
		\centering
		\raisebox{-.25\height}{\includegraphics[width=\linewidth]{./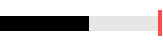}}
	\end{minipage}& \color{gray}(>1.7$\times$)
	& 100 \begin{minipage}[b]{0.12\columnwidth}
		\centering		\raisebox{-.25\height}{\includegraphics[width=\linewidth]{./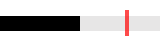}}
	\end{minipage} & \color{gray}(1.6$\times$) 
	\\
	&Cluster & 145 \begin{minipage}[b]{0.12\columnwidth}
		\centering		\raisebox{-.25\height}{\includegraphics[width=\linewidth]{./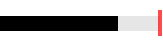}}
	\end{minipage} & \color{gray}(>1.4$\times$)
	& 104 \begin{minipage}[b]{0.12\columnwidth}
		\centering
		\raisebox{-.25\height}{\includegraphics[width=\linewidth]{./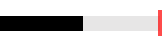}}
	\end{minipage}& \color{gray}(>1.9$\times$) 
	& 75 \begin{minipage}[b]{0.12\columnwidth}
		\centering		\raisebox{-.25\height}{\includegraphics[width=\linewidth]{./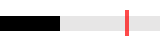}}
	\end{minipage} & \color{gray}(2.1$\times$) 
	\\
	&\textbf{\texttt{HCSFed}} & \color{red}\textbf{81} \begin{minipage}[b]{0.12\columnwidth}
		\centering		\raisebox{-.25\height}{\includegraphics[width=\linewidth]{./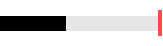}}
	\end{minipage} & \color{gray}(>\textbf{2.5}$\times$) 
	& \color{red}\textbf{80} \begin{minipage}[b]{0.12\columnwidth}
		\centering
		\raisebox{-.25\height}{\includegraphics[width=\linewidth]{./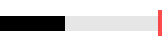}}
	\end{minipage}& \color{gray}(>\textbf{2.5}$\times$) 
	& \color{red}\textbf{75} \begin{minipage}[b]{0.12\columnwidth}
		\centering		\raisebox{-.25\height}{\includegraphics[width=\linewidth]{./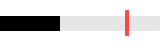}}
	\end{minipage} & \color{gray}(\textbf{2.1}$\times$) 
	\\
   \bottomrule
  \end{tabular}
}
\label{table1}
\end{table}
\begin{figure}[tp]
    \begin{minipage}[t]{0.33\linewidth}
    \centering
    \includegraphics[width=4.33cm]{./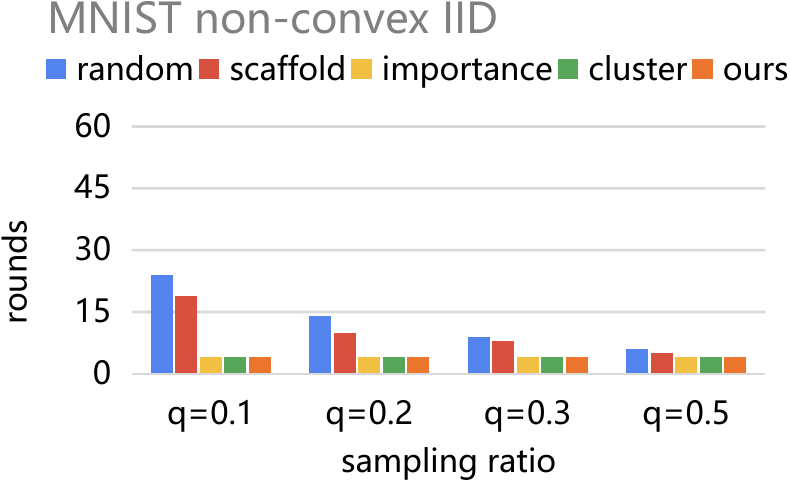}
    \end{minipage}
    \begin{minipage}[t]{0.33\linewidth}
    \centering
    \includegraphics[width=4.33cm]{./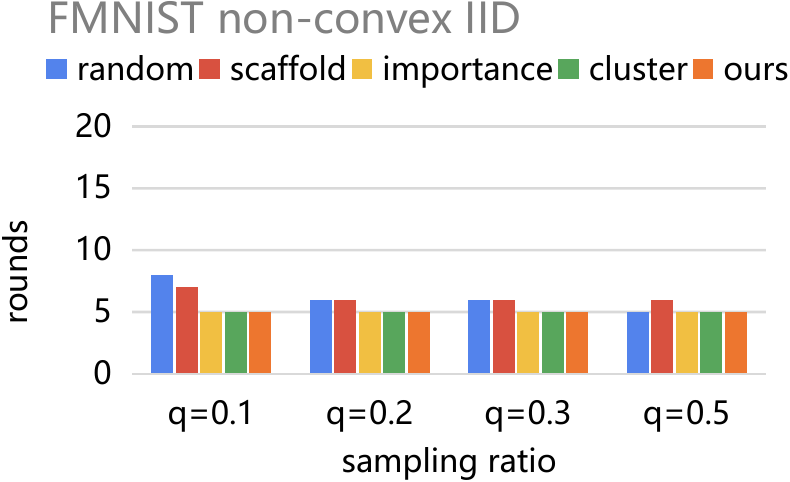}
    \end{minipage}
    \begin{minipage}[t]{0.33\linewidth}
    \centering
    \includegraphics[width=4.33cm]{./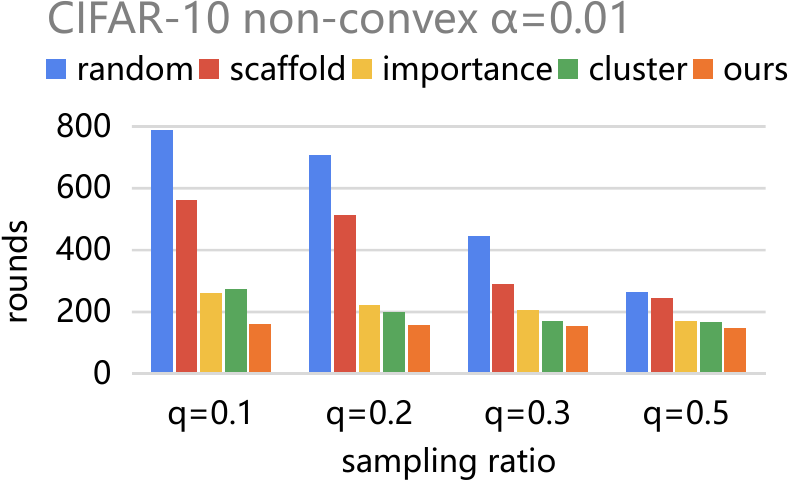}
    \end{minipage}
    \quad
    \begin{minipage}[t]{0.33\linewidth}
    \centering
    \includegraphics[width=4.33cm]{./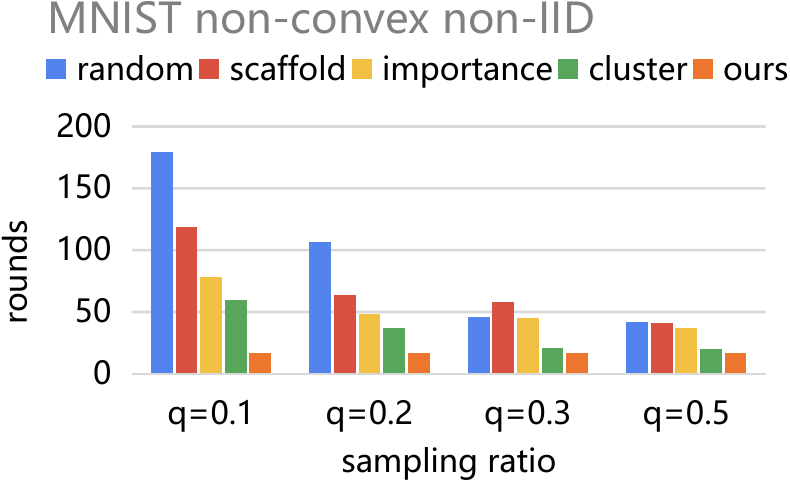}
    \end{minipage}
    \begin{minipage}[t]{0.33\linewidth}
    \centering
    \includegraphics[width=4.33cm]{./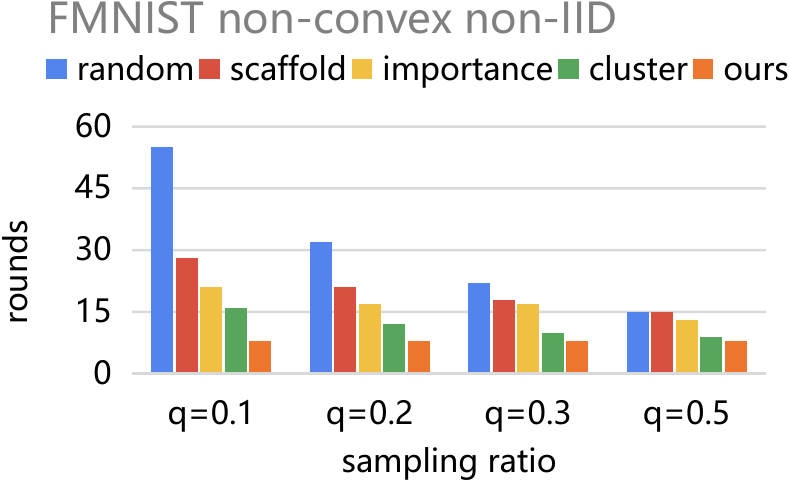}
    \end{minipage}
    \begin{minipage}[t]{0.33\linewidth}
    \centering
    \includegraphics[width=4.33cm]{./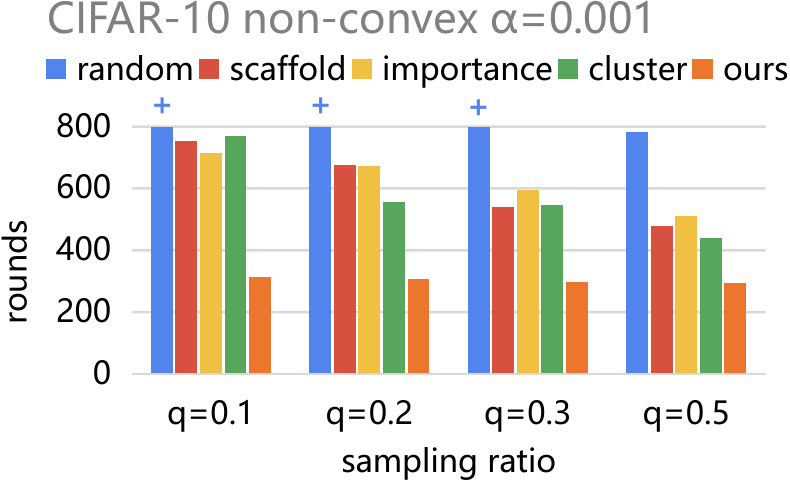}
    \end{minipage}
    \vspace{-1.0em}
    \caption{Required rounds for random, cluster, importance sampling, \texttt{SCAFFOLD}, and \texttt{HCSFed} to achieve 60$\%$ accuracy with $q=0.1$, $N=100$, $nSGD=50$, $\eta=0.01$, $B=50$ on MNIST and FMNIST, with $\alpha \in \{0.01, 0.001\}$, $q=0.1$, $N=100$, $nSGD=80$, $\eta=0.05$, $B=50$ on CIFAR-10.}
    \label{round1}
    \vspace{-1.0em}
\end{figure}

\subsection{Main Results} 
Our evaluation target is to compare the convergence speed of different methods in convex and non-convex FL context.
We choose image classification as our downstream task and compare \texttt{HCSFed} with simple random sampling~\cite{mcmahan2017communication:Lixto}, norm-based importance sampling~\cite{chen2020optimal}, cluster sampling~\cite{pmlr-v139-fraboni21a}, and \texttt{SCAFFOLD}~\cite{karimireddy2020scaffold}.
For better representation of the convergence speed, we follow the previous FL acceleration work setting~\cite{karimireddy2020scaffold} and report the required rounds for different methods with convex model to reach the target test accuracy in Table \ref{table1}.
We can observe that \texttt{HCSFed} consistently achieves the fastest convergence (has the fewest required rounds to reach the target accuracy) across all settings, surpassing not only the client selection schemes but also the SOTA variance-reduced method \texttt{SCAFFOLD}~\cite{karimireddy2020scaffold}. 
We also run \texttt{HCSFed} with non-convex model on different datasets.
The result of non-convex experiment is presented in Figure~\ref{round1}.
\texttt{HCSFed} consistently requires the fewest rounds to reach the target accuracy.
All the results reveal that \texttt{HCSFed} can greatly accelerate the convergence, especially in the low sampling ratio and date heterogeneity case.
As we emphasized in the theoretical analysis, the improvement of \texttt{HCSFed} depends on the sampling ratio and the client heterogeneity. 
The lower the sampling ratio and the more heterogeneous the dataset, the more improvement \texttt{HCSFed} can enjoy. 
Naturally, such great improvement in convergence speed can be attributed to the two key components in \texttt{HCSFed}: 
(i) Different from the previous raw gradient-based methods, we introduce gradient compression to improve the clustering effect by using the more expressive cluster feature representation.
(ii) To further guarantee the robustness of the sampling effect, we introduce a sample size re-allocation scheme and extra importance sampling to control the sampling effect fluctuation in the cluster with low client similarity.
For more details, please refer to Appendix \ref{app:C}.
\subsection{Parameter Sensitivity Study} 
\paragraph{The Number of Clusters~(H).} This experiment demonstrates the sensitivity of \texttt{HCSFed} to the different numbers of clusters. 
We run convex model-based \texttt{HCSFed} on non-IID MNIST with different numbers of clusters, i.e. with $\mathrm{H}\in\left\{5, 6, \cdots, 10\right\}$. 
We report the best classification accuracy and the required rounds for \texttt{HCSFed} with different numbers of clusters to reach the best test accuracy in Figure \ref{Number of Clusters}.
\texttt{HCSFed} achieves stable classification accuracy and similar convergence speeds (measured by the required rounds to achieve the best accuracy), which highlights the robust effect and efficiency of \texttt{HCSFed}.
The results also reveal that the clustering effect of \texttt{HCSFed} is robust regardless of the number of clusters.
Such merit might attribute to the sample size re-allocation scheme or the importance sampling scheme that we use to control the clustering effect fluctuation.
%
\begin{figure}[htbp]
    \centering  
    \subfigure[The impact of different numbers of clusters]{
    \centering    
    \includegraphics[width=6.75cm]{./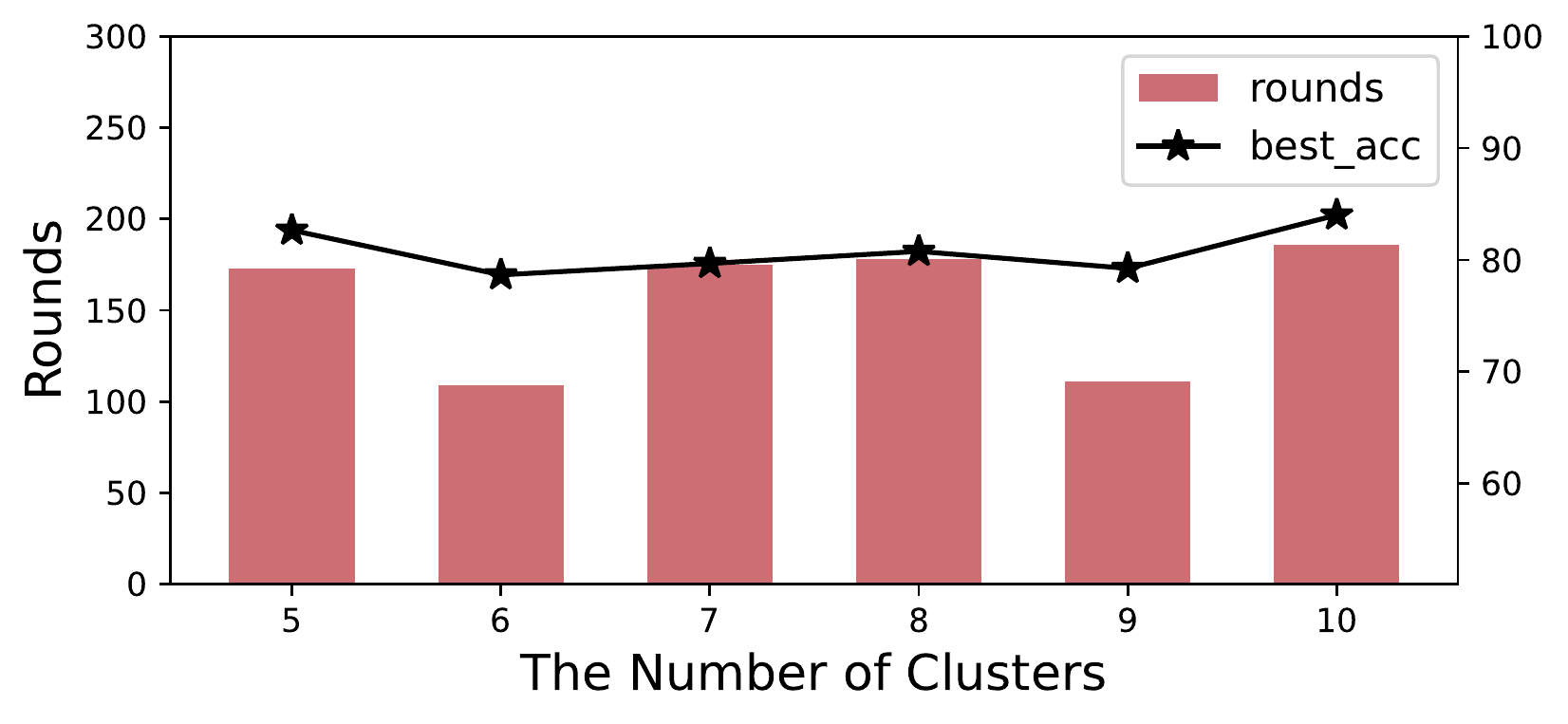} 
    \label{Number of Clusters}
    }
    \subfigure[The impact of different compression rates]{
        \centering    
        \includegraphics[width=6.75cm]{./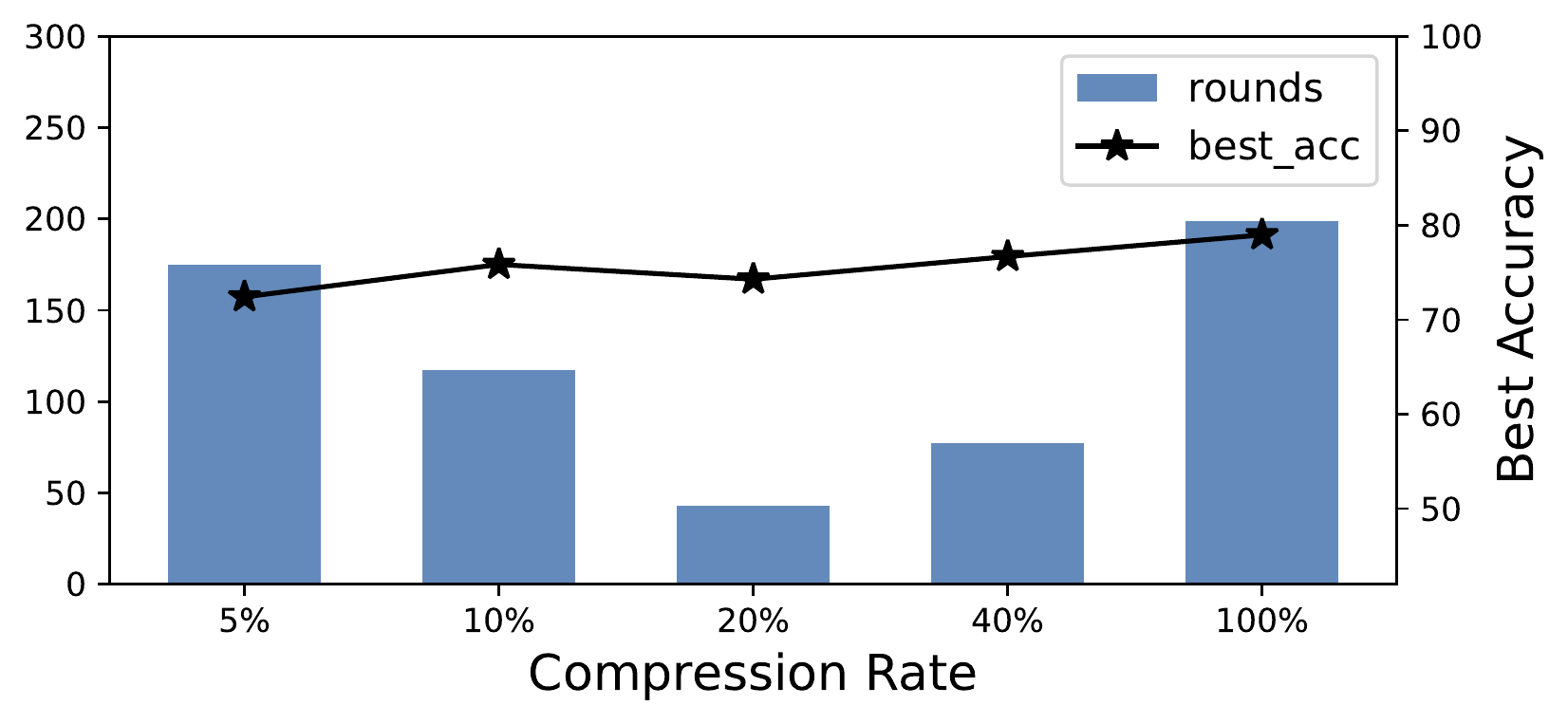}
        \label{Compression rate}
    }
    \vspace{-1.0em}
    \caption{Visualization of Sensitivity Study Results on non-IID MNIST.}
    \label{Sensitivity}
    \vspace{-1.0em}
\end{figure}
\paragraph{The Compression Rate~(R).} We also evaluate how different compression rates affect the global model performance. 
We run convex model-based \texttt{HCSFed} on non-IID MNIST with different compression rates (R), i.e. with $\mathrm{R} \in \left\{5\%, 10\%, 20\%, 40\%, 100\% \right\}$. 
We report the best accuracy and the required rounds of \texttt{HCSFed} with different compression rates to reach the best test accuracy in Figure~\ref{Compression rate}. 
The results show that \texttt{HCSFed} achieves different convergence speed as the compression rate changes.
The noticeable information is that both too high and too low compression rates can not achieve good learning efficiency.
Intuitively, the low compression rate can not ensure the integrity of gradient information while the high compression rate is too complicated to learn efficiently.
When we set the compressed rate to $100\%$, i.e., without gradient compression, the results show that both the convergence speed and final accuracy of the model are not satisfactory, which confirms the effectiveness of gradient compression.
In practice, following the previous compression work~\cite{han2015deep}, we set the compression rate with the highest within-group sum of squares to get a good sampling effect.
%
\begin{figure}[htbp]
    \centering  
    \subfigure[{The test accuracy of individual components}]{   
    \centering    
    \includegraphics[width=6.6cm]{./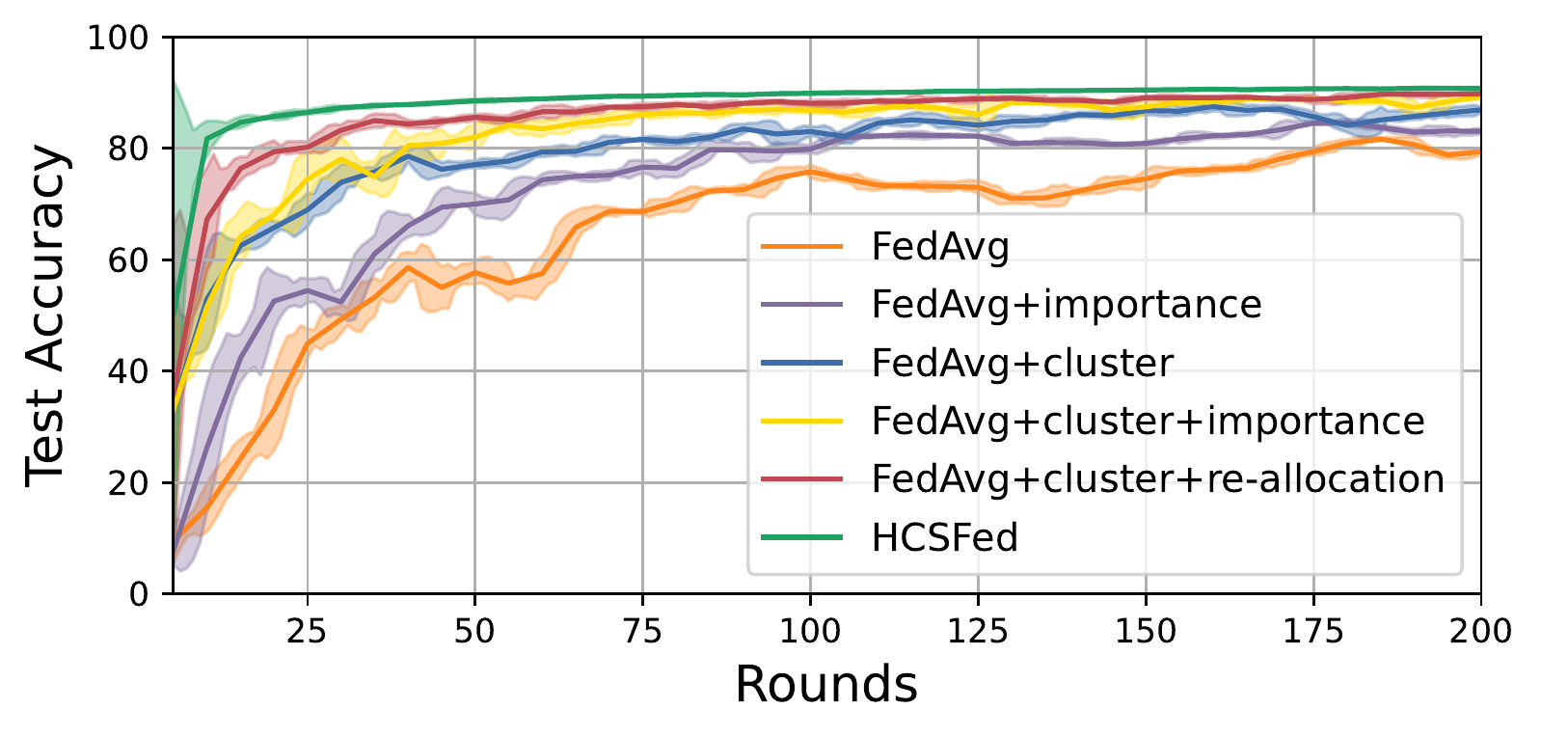} 
    \label{Ablation.1}
    }
    \hspace{-0.2cm}
    \subfigure[{The impact of individual components}]{ 
    \centering    
    \includegraphics[width=7cm]{./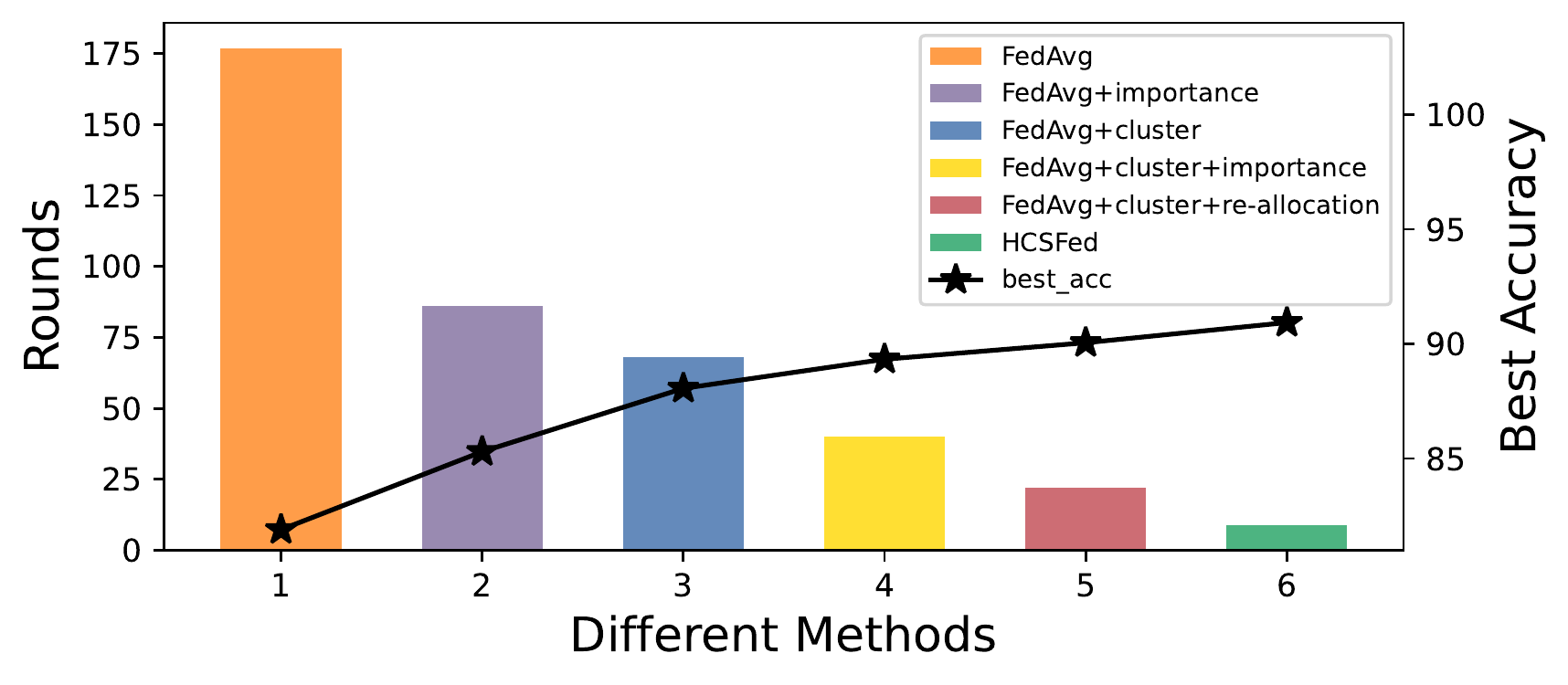}
    \label{Ablation.2}
    }
    \vspace{-1.0em}
    \caption{Visualization of Ablation Study Results on non-IID MNIST.}
    \label{Ablation}
    \vspace{-1.0em}
\end{figure}
\subsection{Ablation Study}
We run logistic regression (convex) on non-IID MNIST with $q = 0.1, nSGD = 50$ to analyse the individual contributions of different components of \texttt{HCSFed} in both final accuracy and convergence speed. 
In Figure~\ref{Ablation.1}, \texttt{HCSFed} is represented as the green line starting from the accuracy of 10$\%$ and ending at 91$\%$, whereas \texttt{FedAvg} is the yellow line starting also from the same position as \texttt{HCSFed} but ending at 81$\%$ after 200 rounds, i.e., 10$\%$ less. 
We report the required rounds for different methods to achieve 80$\%$ accuracy in Figure~\ref{Ablation.2}.
Compared with the plain \texttt{FedAvg}, \texttt{FedAvg} combined with cluster~(blue) or importance sampling~(purple) can accelerate the convergence speed (measured by the required rounds to achieve the target accuracy) by $2.6\times$ and $2.1\times$ times.
The experimental results confirm the important role of the two in improving the learning efficiency.
Impressively, compared with importance sampling, the improvement of the clustering process is more obvious.
This is because clustering can make better use of the correlation among clients, thereby selecting more representative clients to participate in model training.
We also compare \texttt{FedAvg} combined with cluster, \texttt{FedAvg} combined with cluster and re-allocation, \texttt{FedAvg} combined with cluster and importance sampling, and \texttt{HCSFed}.
All the latter methods can achieve a faster convergence in model training than the former, which validates the effectiveness of re-allocation and extra importance sampling.
These results indicate that each component of \texttt{HCSFed} is complementary to each other, and also show the effectiveness of the variance reduction theory in Section~\ref{sec4}.

\section{Concluding Remarks}\label{sec:section6}
In this paper, we propose \texttt{HCSFed}, a novel hybrid clustering-based selection scheme to accelerate the training convergence by variance reduction.
As we mentioned in Section \ref{sec31}, the reduced variance leads to faster convergence.
Therefore, \texttt{HCSFed} can achieve communication cost reduction, as it requires fewer rounds to approach the target training accuracy.
%
Theoretically, we demonstrate the improvement of the proposed scheme in variance reduction.
We present the convergence guarantee of our proposed approach under the convex assumption.
Experimental results demonstrate the superiority and the effectiveness of our method with both the convex and non-convex models.
In the future, the co-variance among different clients and unavailable client settings will be considered, since it may also lead to performance degradation in FL. In the theoretical aspect, following the latest convergence analysis~\cite{khaled2020tighter}, we plan to loose the widely used but strict assumption “bounded gradient” and convexity for more general analysis of the faster convergence achieved by \texttt{HCSFed}.

\clearpage





\comment{
\clearpage
\section*{Checklist}
\begin{enumerate}

\item For all authors...
\begin{enumerate}
  \item Do the main claims made in the abstract and introduction accurately reflect the paper's contributions and scope?
    \answerYes{}
  \item Did you describe the limitations of your work?
    \answerYes{} See section \ref{sec:section6}, we plan to consider more common settings and loose our assumption.
  \item Did you discuss any potential negative societal impacts of your work?
    \answerNA{} We only work with open source datasets, and we believe that the potential negative impacts associated to it are very low.
  \item Have you read the ethics review guidelines and ensured that your paper conforms to them?
    \answerYes{}
\end{enumerate}

\item If you are including theoretical results...
\begin{enumerate}
  \item Did you state the full set of assumptions of all theoretical results?
    \answerYes{} See section~\ref{sec4}
	\item Did you include complete proofs of all theoretical results?
    \answerYes{} See section~\ref{sec4} and appendix.
\end{enumerate}

\item If you ran experiments...
\begin{enumerate}
    \item Did you include the code, data, and instructions needed to reproduce the main experimental results (either in the supplemental material or as a URL)?
    \answerYes{} Code is made available at \url{https://anonymous.4open.science/r/FL-hybrid-client-selection}
    \item Did you specify all the training details (e.g., data splits, hyperparameters, how they were chosen)?
    \answerYes{} See in section~\ref{sec5}
	\item Did you report error bars (e.g., with respect to the random seed after running experiments multiple times)?
    \answerYes{}
	\item Did you include the total amount of compute and the type of resources used (e.g., type of GPUs, internal cluster, or cloud provider)?
    \answerNA{} Since we only run simulations, this is not applicable.
\end{enumerate}

\item If you are using existing assets (e.g., code, data, models) or curating/releasing new assets...
\begin{enumerate}
  \item If your work uses existing assets, did you cite the creators?
    \answerYes{}
  \item Did you mention the license of the assets?
    \answerYes{} All the data and assets we used in this manuscript are open-source.
  \item Did you include any new assets either in the supplemental material or as a URL?
    \answerYes{} We included an anonymous URL in the supplemental material for the datasets used.
  \item Did you discuss whether and how consent was obtained from people whose data you're using/curating?
    \answerNA{}
  \item Did you discuss whether the data you are using/curating contains personally identifiable information or offensive content?
    \answerNA{}
\end{enumerate}

\item If you used crowdsourcing or conducted research with human subjects...
\begin{enumerate}
  \item Did you include the full text of instructions given to participants and screenshots, if applicable?
    \answerNA{}
  \item Did you describe any potential participant risks, with links to Institutional Review Board (IRB) approvals, if applicable?
    \answerNA{}
  \item Did you include the estimated hourly wage paid to participants and the total amount spent on participant compensation?
    \answerNA{}
\end{enumerate}

\end{enumerate}
}
\clearpage
\appendix
\section{Extra Illustration} \label{app:A}
\subsection{Dirichlet Distribution Illustration} \label{app:A.1}
Given different values of the concentration parameter $\alpha$ for the Dirichlet distribution, datasets with different degrees of heterogeneity can be generated. 
In particular, higher values of $\alpha$ lead to a more uniform distribution, indicating that each client has an almost equally weighted combination of labels. 
Lower values of $\alpha$ imply weights concentrated more heavily on only one of the labels, or more extreme label membership.
Table \ref{tab2} is an example of Dirichlet distribution used in the experiments. 
As shown in the Table \ref{tab2}, the data distribution on each client is different, and in the case of extreme non-IID, i.e., $\alpha  \rightarrow 0$, most of the data are concentrated under only one label, while the amount of others is almost zero.

\begin{table*}[htbp]
  \centering
  \caption{The actual Dirichlet Distribution (non-IID) generated from CIFAR-10 with $\alpha=0.001$}
  \setlength{\tabcolsep}{1.8mm} 
  \renewcommand\arraystretch{1.1} 
  \scalebox{0.9}{
  \begin{tabular}{c|c|c|c|c|c|c|c|c|c|c|c} 
  \hline
  \multirow{2}*{\textbf{Client ID}} &\multicolumn{10}{c|}{\textbf{Numbers of Samples in the Classes}} &\multirow{2}*{\textbf{Distribution}}\\ 
  \cline{2-11}
  &$c_0$&$c_1$&$c_2$&$c_3$&$c_4$&$c_5$&$c_6$&$c_7$&$c_8$&$c_9$\\ 
  \cline{1-12}
  {k=0}  & 2 & 1 & 33 & 117 & 100 & 6 & 1 & \textbf{0} & 1 & 239 &\begin{minipage}[b]{0.2\columnwidth}
		\centering
		\raisebox{-.3\height}{\includegraphics[width=1\linewidth]{./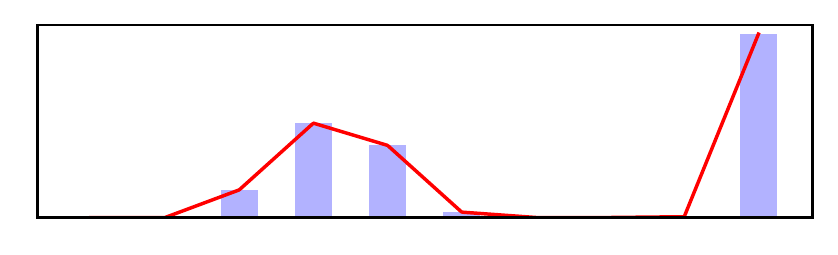}}
	\end{minipage}
  \\
  \cline{1-12}
  {k=1}  & \textbf{0} & \textbf{0} & \textbf{0} & 1 & 1 & 29 & \textbf{467} & \textbf{0} & \textbf{0} & 2
  &\begin{minipage}[b]{0.2\columnwidth}
		\centering
		\raisebox{-.3\height}{\includegraphics[width=1\linewidth]{./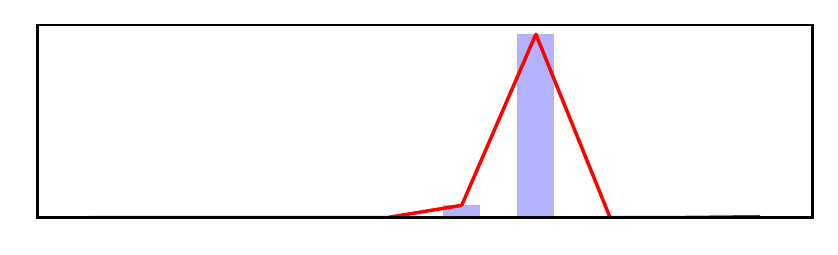}}
	\end{minipage}
  \\ 
  \cline{1-12}
  {k=2}  & 2 & \textbf{397} & 5 & 1 & 2 & 86 & \textbf{0} & 1 & \textbf{0} & 6 
  &\begin{minipage}[b]{0.2\columnwidth}
		\centering
		\raisebox{-.3\height}{\includegraphics[width=1\linewidth]{./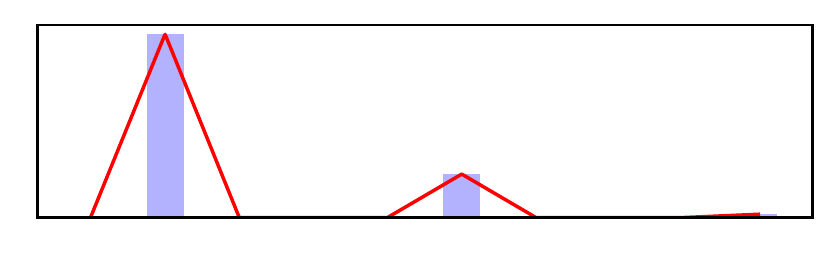}}
	\end{minipage}
  \\ 
  \cline{1-12}
  {k=3}  & 1 & \textbf{0} & \textbf{0} & \textbf{0} & \textbf{0} & 3 & \textbf{0} & \textbf{0} & 125 & \textbf{371} 
  &\begin{minipage}[b]{0.2\columnwidth}
		\centering
		\raisebox{-.3\height}{\includegraphics[width=1\linewidth]{./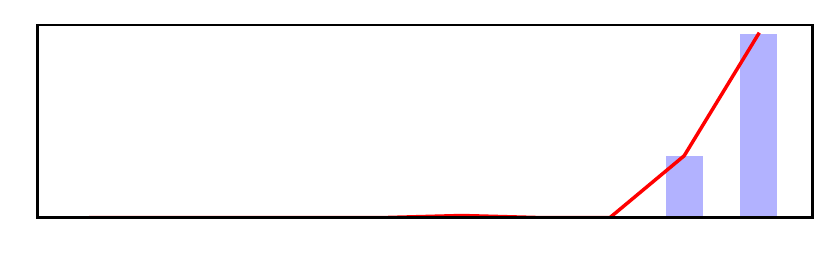}}
	\end{minipage}
  \\ 
  \cline{1-12}
  {k=4}  & 1 & 67 & 5 & \textbf{0} & 15 & \textbf{304} & \textbf{0} & \textbf{0} & 34 & 74 
  &\begin{minipage}[b]{0.2\columnwidth}
		\centering
		\raisebox{-.3\height}{\includegraphics[width=1\linewidth]{./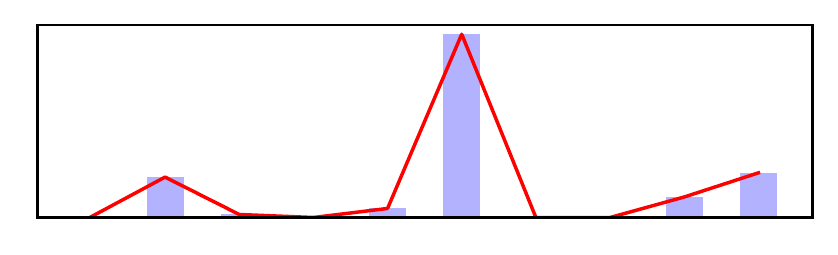}}
	\end{minipage}
  \\ 
  \cline{1-12}
  {$\cdots$}  & $\cdots$ & $\cdots$ &$\cdots$ & $\cdots$ & $\cdots$ & $\cdots$ & $\cdots$ & $\cdots$ & $\cdots$ & $\cdots$ &$\cdots$
  \\
  \cline{1-12}
  {k=95}  & 32 & 213 & \textbf{0} & 94 & 17 & 3 & 138 & \textbf{0} & \textbf{0} & 3
  &\begin{minipage}[b]{0.2\columnwidth}
		\centering
		\raisebox{-.3\height}{\includegraphics[width=1\linewidth]{./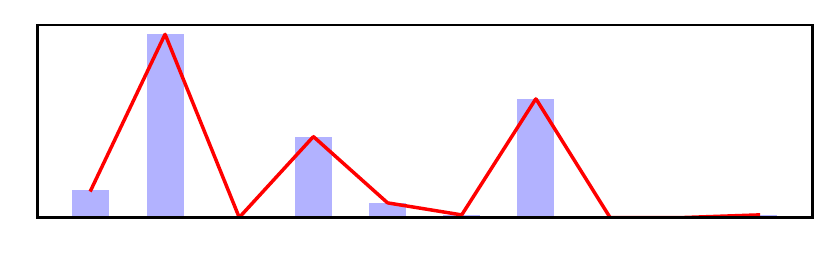}}
	\end{minipage} 
  \\ 
  \cline{1-12}
  {k=96}  & 51 & 36 & 166 & 32 & \textbf{0} & \textbf{0} & 8 & 203 & \textbf{0} & 4 
  &\begin{minipage}[b]{0.2\columnwidth}
		\centering
		\raisebox{-.3\height}{\includegraphics[width=1\linewidth]{./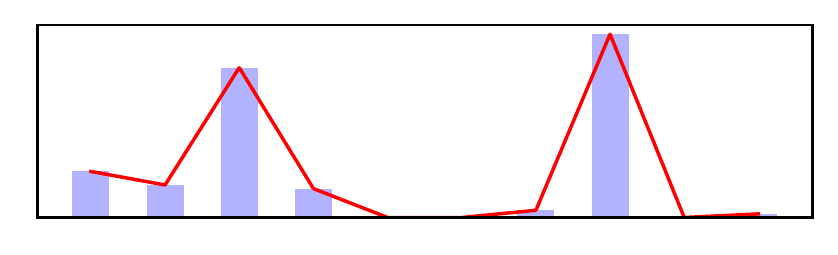}}
	\end{minipage}
  \\ 
  \cline{1-12}
  {k=97}  & 25 & \textbf{0} & \textbf{347} & 17 & 7 & \textbf{0} & \textbf{0} & 44 & \textbf{0} & 60 
  &\begin{minipage}[b]{0.2\columnwidth}
		\centering
		\raisebox{-.3\height}{\includegraphics[width=1\linewidth]{./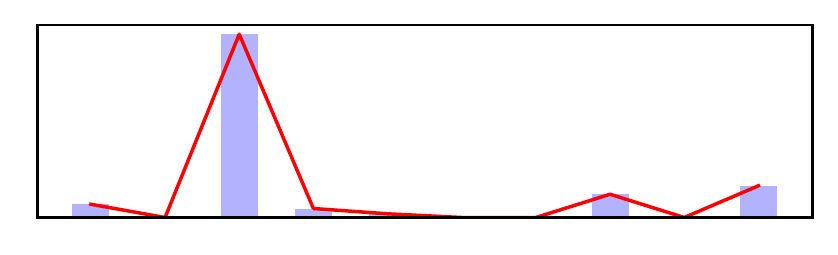}}
	\end{minipage}
  \\ 
  \cline{1-12}
  {k=98}  & \textbf{0} & 1 & \textbf{0} & 2 & 1 & 18 & 3 & 60 & \textbf{413} & 2 
  &\begin{minipage}[b]{0.2\columnwidth}
		\centering
		\raisebox{-.3\height}{\includegraphics[width=1\linewidth]{./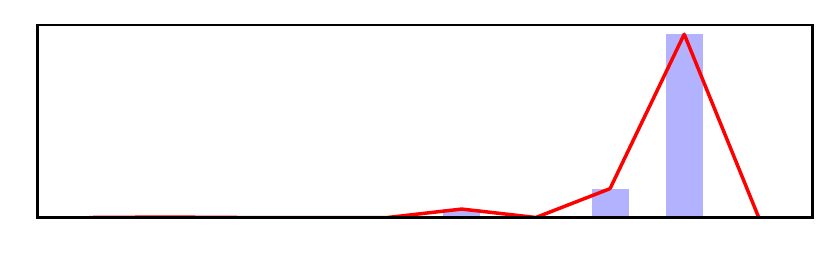}}
	\end{minipage}
  \\ 
  \cline{1-12}
  {k=99}  & \textbf{465} & \textbf{0} & 4 & 2 & 2 & 3 & 4 & 14 & 4 & 2 
  &\begin{minipage}[b]{0.2\columnwidth}
		\centering
		\raisebox{-.3\height}{\includegraphics[width=1\linewidth]{./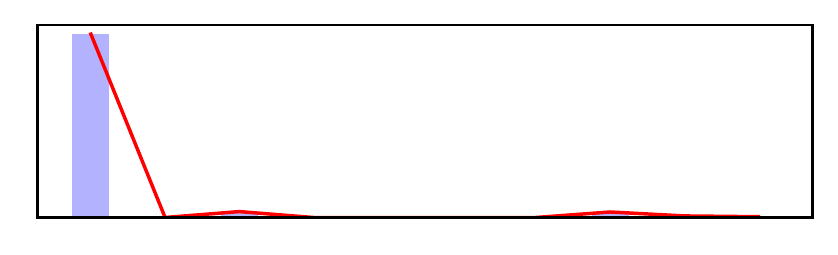}}
	\end{minipage}
  \\
  \hline
  \end{tabular}
  }
  \label{tab2}
\end{table*}

\subsection{Gradient Compress Algorithm} \label{app:A.2}
\SetInd{0.3em}{0.4em}  
\IncMargin{1em} 
\begin{algorithm}[H]
\caption{GC(Gradient Compress)}
\label{alg:algorithm4}
\KwIn{Raw updates in the $t^{th}$ round of the $k^{th}$ client $G_{t}^{k}=\{g_{1}, g_{2}, \ldots, g_{d}\}$}
\textbf{Initialize} Randomly select $d'$ $g_{i}$ as the group centers \{$x_{1}, x_{2},\ldots, x_{d'}\}$\;
\textbf{Initialize} $C_{j}=\varnothing\ (1 \leq j \leq d^{\prime})$\;
\Repeat{$\forall\ j = \{1,2,\ldots, d^{\prime}\}, x_{j}^{\prime}=x_{j}$}
{\For{each $g_{i},\ i=1,2,\ldots,d$}
{$\lambda_{i} = \arg \min _{j \in \{1,2,\ldots,d'\}} \left\|g_i-x_{j}\right\|_{2}$\; ${C}_{\lambda_{i}}={C}_{\lambda_{i}}\bigcup\left\{ g_i \right\}$\;
}
\For{each cluster $j=1,2,\ldots, d^{\prime}$}
{Calculate new center $x_{j}^{\prime}=\frac{1}{\left|{C}_{j}\right|} \sum_{g_{i} \in {C}_{j}}{g_{i}}$\;
{$x_{j} \leftarrow x_{j}^{\prime}$\;}
}
}
\KwOut{$\boldsymbol{X}_{t}^{k}=\left\{x_{1}, x_{2}, \cdots, x_{d'} \right\}$;}
\end{algorithm}

\subsection{Key Lemmas} \label{app:A.3}
Following \cite{li2019convergence}, we present necessary assumptions and extra notations that we used to prove the convergence of FedAvg with random client selection. 
\paragraph{Assumptions.}
The convergence of FedAvg with random sampling scheme has been derived in \cite{li2019convergence}. The proof relies on the assumptions as follows. Assumptions 3 and 4 have been given by  \cite{zhang2013communication,stich2018local,stich2018sparsified}.
\begin{assumption}[L-Smooth] \label{assumption1}
$\forall$ $\mathbf{v}$ and $\mathbf{w}, k=1,\cdots,N F_{k}(\mathbf{v}) \leq F_{k}(\mathbf{w})+(\mathbf{v}-$ $\mathbf{w})^{T} \nabla F_{k}(\mathbf{w})+\frac{L}{2}\|\mathbf{v}-\mathbf{w}\|_{2}^{2}$ where the $\mathbf{v}$,$\mathbf{w}$ are different model parameters.
\end{assumption}
\begin{assumption}[Strongly Convex] \label{assumption2} 
$\forall\ \mathbf{v}$ and $\mathbf{w}, k=1,\cdots,N F_{k}(\mathbf{v}) \geq F_{k}(\mathbf{w})+(\mathbf{v}-$ $\mathbf{w})^{T} \nabla F_{k}(\mathbf{w})+\frac{\mu}{2}\|\mathbf{v}-\mathbf{w}\|_{2}^{2}$ where the $\mathbf{v}$,$\mathbf{w}$ are different model parameters.
\end{assumption}
\begin{assumption}[Bounded Variance] \label{assumption3}
Let $\xi_{t}^{k}$ be sampled from the $k^{th}$ device's local data uniformly at random. The variance of stochastic gradients in each device is bounded:
$$
\mathbb{E} \left\Vert\nabla F_{k}\left(\mathbf{w}_{t}^{k},\xi_{t}^{k}\right)-\nabla F_{k}\left(\mathbf{w}_{t}^{k}\right)\right\Vert^{2} \leq\sigma_{k}^{2},~\forall\ k=1, \cdots, N
$$
\end{assumption}
\begin{assumption}[Bounded Expectation] \label{assumption4}
The expectation of stochastic gradients in squared norm is bounded by $G^2$, i.e., 
$$
\mathbb{E}\left\|\nabla F_{k}\left(\mathbf{w}_{t}^{k}, \xi_{t}^{k}\right)\right\|^{2} \leq G^{2}, \forall\ k=1,\cdots,N,t=1,\cdots T-1
$$
\end{assumption}
\paragraph{Additional Notation} We assume that FedAvg always activates all devices at the beginning of each round and then uses the parameters maintained in only a few sampled devices to produce the next-round parameter. This updating scheme is equivalent to the original. 
Let $\mathcal{I}_{E}$ be the set of global synchronization, i.e., $\mathcal{I}_{E}=\{n E \mid n=1,2, \cdots\}$. If $t+1 \in \mathcal{I}_{E}$, i.e., the time step to communicate.
Then the update of FedAvg with partial devices active can be described as: for all $k \in[N]$,
\begin{gather}
\mathbf{v}_{t+1}^{k}=\mathbf{w}_{t}^{k}-\eta_{t} \nabla F_{k}\left(\mathbf{w}_{t}^{k}, \xi_{t}^{k}\right) \\
\mathbf{w}_{t+1}^{k}= \begin{cases}\mathbf{v}_{t+1}^{k} & \text {,if } t+1 \notin \mathcal{I}_{E} \\
\text { average }\left\{\mathbf{v}_{t+1}^{k}\right\}_{k \in \mathcal{S}_{t+1}} & \text {,if } t+1 \in \mathcal{I}_{E},\end{cases}
\end{gather}
where $\mathcal{S}_{t+1}$ denotes the subset of $(t+1)^{th}$ round. Here, an additional variable $\mathbf{v}_{t+1}^{k}$ is introduced to represent the immediate result of one step SGD update from $\mathbf{w}_{t}^{k}$. We interpret $\mathbf{w}_{t+1}^{k}$ as the parameter obtained after communication steps.
Let $F^{*}$ and $F_{k}^{*}$ be the minimum values of $F$ and $F_{k}$, respectively. We use the term $\Gamma=F^{*}-\sum_{k=1}^{N} p_{k} F_{k}^{*}$ for quantifying the degree of non-IID, where the $p_{k}$ denotes the aggregation weight. If the data are IID, then $\Gamma$ goes to zero as the number of samples grows. If the data are non-IID, then $\Gamma$ is nonzero, and its magnitude reflects the heterogeneity of the data distribution.
\setcounter{counter}{0}
\begin{Lemma}[Results of one step SGD] \label{lemma1}
    Assume Assumption \ref{assumption1} and \ref{assumption2}. If $\eta_{t} \leq \frac{1}{4 L}$, we have
    $$
    \mathbb{E}\left\|\overline{\mathbf{v}}_{t+1}-\mathbf{w}^{\star}\right\|^{2} \leq\left(1-\eta_{t} \mu\right) \mathbb{E}\left\|\overline{\mathbf{w}}_{t}-\mathbf{w}^{\star}\right\|^{2}+\eta_{t}^{2} \mathbb{E}\left\|G_{t}-\overline{G}_{t}\right\|^{2}+6 L \eta_{t}^{2} \Gamma+2 \mathbb{E} \sum_{k=1}^{N} p_{k}\left\|\overline{\mathbf{w}}_{t}-\mathbf{w}_{t}^{k}\right\|^{2}
    $$
where $\Gamma=F^{*}-\sum_{k=1}^{N} p_{k} F_{k}^{\star} \geq 0$, 
\end{Lemma}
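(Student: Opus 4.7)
The plan is to exploit the unbiasedness of the stochastic gradients to first peel off the stated variance term $\eta_t^2 \mathbb{E}\|G_t-\overline{G}_t\|^2$, and then carry out a standard strongly-convex one-step SGD descent analysis on the deterministic residual, being careful that each gradient is evaluated at the \emph{local} iterate $\mathbf{w}_t^k$ rather than the averaged iterate $\overline{\mathbf{w}}_t$. Concretely, I would write $\overline{\mathbf{v}}_{t+1}-\mathbf{w}^\star = (\overline{\mathbf{w}}_t-\eta_t\overline{G}_t-\mathbf{w}^\star)-\eta_t(G_t-\overline{G}_t)$, expand the square, and use $\mathbb{E}[G_t\mid\mathbf{w}_t^k]=\overline{G}_t$ to kill the cross-term, giving
\begin{equation*}
\mathbb{E}\|\overline{\mathbf{v}}_{t+1}-\mathbf{w}^\star\|^2 = \mathbb{E}\|\overline{\mathbf{w}}_t-\eta_t\overline{G}_t-\mathbf{w}^\star\|^2 + \eta_t^2\mathbb{E}\|G_t-\overline{G}_t\|^2.
\end{equation*}

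Next I would bound the first term on the right by expanding the square into $\|\overline{\mathbf{w}}_t-\mathbf{w}^\star\|^2 - 2\eta_t\langle\overline{\mathbf{w}}_t-\mathbf{w}^\star,\overline{G}_t\rangle + \eta_t^2\|\overline{G}_t\|^2$. The inner product is split by inserting $\pm\mathbf{w}_t^k$: the piece $-2\eta_t\sum_k p_k\langle\overline{\mathbf{w}}_t-\mathbf{w}_t^k,\nabla F_k(\mathbf{w}_t^k)\rangle$ is handled by AM--GM, producing the drift term $\sum_k p_k\|\overline{\mathbf{w}}_t-\mathbf{w}_t^k\|^2$ plus a controllable $\eta_t^2\|\nabla F_k(\mathbf{w}_t^k)\|^2$ term; and the piece $-2\eta_t\sum_k p_k\langle\mathbf{w}_t^k-\mathbf{w}^\star,\nabla F_k(\mathbf{w}_t^k)\rangle$ is bounded by Assumption~\ref{assumption2} to yield $-\eta_t\mu\sum_k p_k\|\mathbf{w}_t^k-\mathbf{w}^\star\|^2 + 2\eta_t\sum_k p_k(F_k(\mathbf{w}^\star)-F_k(\mathbf{w}_t^k))$. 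The remaining $\eta_t^2\|\overline{G}_t\|^2$ is bounded by Jensen's inequality by $\eta_t^2\sum_k p_k\|\nabla F_k(\mathbf{w}_t^k)\|^2$, and then all gradient-norm squared terms are converted via Assumption~\ref{assumption1} through the standard smoothness inequality $\|\nabla F_k(\mathbf{w}_t^k)\|^2\leq 2L(F_k(\mathbf{w}_t^k)-F_k^\star)$.

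Collecting the function-value contributions gives a sum of the form $2\eta_t\sum_k p_k[F_k(\mathbf{w}^\star)-F_k(\mathbf{w}_t^k)] + 2L\eta_t^2(1+\eta_t\cdot\mathrm{const})\sum_k p_k(F_k(\mathbf{w}_t^k)-F_k^\star)$. Using $\eta_t\leq 1/(4L)$ to absorb the second factor, and rewriting $F_k(\mathbf{w}^\star)-F_k(\mathbf{w}_t^k)=[F_k(\mathbf{w}^\star)-F_k^\star]-[F_k(\mathbf{w}_t^k)-F_k^\star]$, the $-F_k(\mathbf{w}_t^k)+F_k^\star$ contributions cancel favorably (they appear with opposite signs up to a factor smaller than one), and what survives is $6L\eta_t^2\sum_k p_k[F_k(\mathbf{w}^\star)-F_k^\star]$. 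Since $\sum_k p_k F_k(\mathbf{w}^\star)=F(\mathbf{w}^\star)=F^\star$, this is exactly $6L\eta_t^2\Gamma$. Finally, the strong-convexity term $-\eta_t\mu\sum_k p_k\|\mathbf{w}_t^k-\mathbf{w}^\star\|^2$ is combined with $\|\overline{\mathbf{w}}_t-\mathbf{w}^\star\|^2$ via Jensen's inequality applied to $\sum_k p_k\|\mathbf{w}_t^k-\mathbf{w}^\star\|^2\geq\|\overline{\mathbf{w}}_t-\mathbf{w}^\star\|^2$, producing the $(1-\eta_t\mu)$ contraction factor, while the drift term $2\sum_k p_k\|\overline{\mathbf{w}}_t-\mathbf{w}_t^k\|^2$ is retained as stated.

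The main obstacle, and what makes the bookkeeping delicate, is the \textbf{sign-tracking of the function-value pieces}: both $F_k(\mathbf{w}_t^k)-F_k^\star$ and $F_k(\mathbf{w}^\star)-F_k(\mathbf{w}_t^k)$ appear across several terms (from strong convexity, from smoothness, and from the $\|\overline{G}_t\|^2$ bound), and they must be regrouped so that the negative $-F_k(\mathbf{w}_t^k)$ pieces dominate the positive ones. This is exactly where $\eta_t\leq 1/(4L)$ is used: it forces the coefficient on the positive $F_k(\mathbf{w}_t^k)-F_k^\star$ terms to be strictly smaller than on the negative ones, so that the surviving constant on $\Gamma$ is the clean $6L\eta_t^2$. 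The rest is algebraic manipulation using standard inequalities.
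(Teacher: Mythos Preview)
The paper does not actually prove Lemma~\ref{lemma1}; it is quoted verbatim from \cite{li2019convergence} and only Lemma~\ref{lemma4} is proved in Appendix~\ref{app:A.3}. Your outline is the standard Li--Huang--Yang--Wang--Zhang argument and is correct through the variance split, the insertion of $\pm\mathbf{w}_t^k$ in the inner product, the use of strong convexity, and the smoothness bound $\|\nabla F_k(\mathbf{w}_t^k)\|^2\le 2L(F_k(\mathbf{w}_t^k)-F_k^\star)$.

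There is, however, a genuine gap in your ``cancellation'' step. After the rewrite you indicate, the function-value residual is
\[
2\eta_t\Gamma\;-\;\gamma_t\sum_k p_k\bigl(F_k(\mathbf{w}_t^k)-F_k^\star\bigr),\qquad \gamma_t=2\eta_t(1-2L\eta_t),
\]
and since each $F_k(\mathbf{w}_t^k)-F_k^\star\ge 0$, simply dropping the negative term leaves $2\eta_t\Gamma$, \emph{not} $6L\eta_t^2\Gamma$. An $O(\eta_t)$ coefficient on $\Gamma$ is too weak for the downstream $O(1/T)$ rate in Theorem~\ref{th2}. The way Li et al.\ obtain the sharper $6L\eta_t^2\Gamma$ --- and simultaneously the coefficient $2$ on the drift term that your outline asserts but never produces --- is an extra detour you have omitted: rewrite $-\gamma_t\sum_k p_k(F_k(\mathbf{w}_t^k)-F^\star)$ by inserting $\pm F_k(\overline{\mathbf{w}}_t)$, use convexity of $F_k$ to bound $F_k(\overline{\mathbf{w}}_t)-F_k(\mathbf{w}_t^k)\le\langle\nabla F_k(\overline{\mathbf{w}}_t),\overline{\mathbf{w}}_t-\mathbf{w}_t^k\rangle$, apply AM--GM a second time (this is where the \emph{second} copy of $\sum_k p_k\|\overline{\mathbf{w}}_t-\mathbf{w}_t^k\|^2$ appears), bound $\|\nabla F_k(\overline{\mathbf{w}}_t)\|^2$ via smoothness, and finally use $F(\overline{\mathbf{w}}_t)\ge F^\star$ to discard the leftover $(2L\eta_t^2-\gamma_t)(F(\overline{\mathbf{w}}_t)-F^\star)\le 0$. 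That sequence converts $2\eta_t\Gamma$ into $4L\eta_t^2\Gamma+2L\eta_t^2\Gamma=6L\eta_t^2\Gamma$ and supplies the missing drift term. Without it, your sketch proves a weaker inequality than the one stated.
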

\begin{Lemma}[Bounding the variance] \label{lemma2}
    Assume Assumption \ref{assumption3} holds, and $\sigma_{k}$ defined there. It follows that
    $$
    \mathbb{E}\left\|G_{t}-\overline{G}_{t}\right\|^{2} \leq \sum_{k=1}^{N} p_{k}^{2} \sigma_{k}^{2},
    $$
    where the $G_{t}$ is the gradient vector of $t^{th}$ round.
\end{Lemma}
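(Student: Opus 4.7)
The plan is to expand $G_t - \overline{G}_t$ as a weighted sum of per-client noise terms and then exploit the independence of the stochastic samples $\xi_t^k$ across clients to eliminate the cross terms, after which Assumption~\ref{assumption3} gives the result directly.

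First I would write $G_t = \sum_{k=1}^{N} p_k \nabla F_k(\mathbf{w}_t^k, \xi_t^k)$ and $\overline{G}_t = \sum_{k=1}^{N} p_k \nabla F_k(\mathbf{w}_t^k)$, so that
\begin{equation*}
G_t - \overline{G}_t = \sum_{k=1}^{N} p_k \bigl( \nabla F_k(\mathbf{w}_t^k, \xi_t^k) - \nabla F_k(\mathbf{w}_t^k) \bigr).
\end{equation*}
Define the per-client zero-mean noise $\zeta_t^k := \nabla F_k(\mathbf{w}_t^k, \xi_t^k) - \nabla F_k(\mathbf{w}_t^k)$. Each $\zeta_t^k$ satisfies $\mathbb{E}[\zeta_t^k] = 0$ since $\xi_t^k$ is drawn uniformly at random from the $k$-th client's local data, and $\mathbb{E}\|\zeta_t^k\|^2 \le \sigma_k^2$ by Assumption~\ref{assumption3}.

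Next I would expand the squared norm:
\begin{equation*}
\mathbb{E}\|G_t - \overline{G}_t\|^2 = \sum_{k=1}^{N} p_k^2 \,\mathbb{E}\|\zeta_t^k\|^2 + \sum_{k \ne j} p_k p_j \,\mathbb{E}\bigl\langle \zeta_t^k, \zeta_t^j \bigr\rangle.
\end{equation*}
The key observation is that the local samples $\xi_t^k$ and $\xi_t^j$ are drawn independently at different clients, so $\zeta_t^k$ and $\zeta_t^j$ are independent (conditional on the iterates $\mathbf{w}_t^k, \mathbf{w}_t^j$, which are measurable with respect to past randomness). Hence $\mathbb{E}\langle \zeta_t^k, \zeta_t^j \rangle = \langle \mathbb{E}\zeta_t^k, \mathbb{E}\zeta_t^j \rangle = 0$ for $k \ne j$, and all cross terms vanish.

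Applying Assumption~\ref{assumption3} to each remaining diagonal term yields
\begin{equation*}
\mathbb{E}\|G_t - \overline{G}_t\|^2 = \sum_{k=1}^{N} p_k^2 \,\mathbb{E}\|\zeta_t^k\|^2 \le \sum_{k=1}^{N} p_k^2 \sigma_k^2,
\end{equation*}
which is the claim. The only subtlety worth checking carefully is the conditional independence argument that kills the cross terms; one should condition on the sigma-algebra generated by the history up to round $t$ (so that $\mathbf{w}_t^k$ are treated as deterministic) and observe that the fresh samples $\{\xi_t^k\}_{k=1}^{N}$ drawn this round are mutually independent. Once that is stated cleanly, the rest is a one-line expansion and an application of the assumption.
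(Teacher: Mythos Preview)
Your argument is correct and is precisely the standard proof of this lemma: write $G_t-\overline G_t$ as a $p_k$-weighted sum of the per-client zero-mean stochastic-gradient noises, expand the squared norm, use independence of the $\xi_t^k$ across clients (conditionally on the history) to kill the cross terms, and apply Assumption~\ref{assumption3} termwise. The paper itself does not give a proof of Lemma~\ref{lemma2}; it simply states the lemma in Appendix~\ref{app:A.3} as one of the ``Key Lemmas'' taken verbatim from \cite{li2019convergence}, so there is no alternative argument to compare against---your proof coincides with the one in that reference.
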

\begin{Lemma}[Bounding the divergence of $\left\{\mathbf{w}_{t}^{k}\right\}$] \label{lemma3}
    Assume Assumption \ref{assumption4}, that $\eta_{t}$ is non-increasing and $\eta_{t} \leq 2 \eta_{t+E}$ for all $t \geq 0$. It follows that
    $$
    \mathbb{E}\left[\sum_{k=1}^{N} p_{k}\left\|\overline{\mathbf{w}}_{t}-\mathbf{w}_{t}^{k}\right\|^{2}\right] \leq 4 \eta_{t}^{2}(E-1)^{2} G^{2} .
    $$
\end{Lemma}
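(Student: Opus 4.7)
The plan is to bound the deviation of each local iterate from the average by first reducing to its deviation from the latest synchronized iterate, and then unrolling the local SGD updates. Let $t_0 \leq t$ be the most recent synchronization time, so that $t - t_0 \leq E - 1$ and $\mathbf{w}_{t_0}^k = \overline{\mathbf{w}}_{t_0}$ for every $k$ (all clients are initialized to the same global model right after synchronization).

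The first step is to exploit the fact that a weighted mean minimizes the weighted sum of squared deviations. Specifically, since $\overline{\mathbf{w}}_t = \sum_k p_k \mathbf{w}_t^k$ is the $\{p_k\}$-weighted mean, for any reference point $\mathbf{a}$,
\begin{equation*}
\sum_{k=1}^N p_k \bigl\|\mathbf{w}_t^k - \overline{\mathbf{w}}_t\bigr\|^2 \leq \sum_{k=1}^N p_k \bigl\|\mathbf{w}_t^k - \mathbf{a}\bigr\|^2.
\end{equation*}
Choosing $\mathbf{a} = \overline{\mathbf{w}}_{t_0} = \mathbf{w}_{t_0}^k$ removes the averaging and turns the problem into bounding the drift of each client from its own post-sync state.

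Next I would unroll the local SGD recursion: since no averaging occurs between $t_0$ and $t$,
\begin{equation*}
\mathbf{w}_t^k - \mathbf{w}_{t_0}^k = -\sum_{s=t_0}^{t-1} \eta_s \nabla F_k(\mathbf{w}_s^k, \xi_s^k).
\end{equation*}
Applying the Cauchy--Schwarz / Jensen inequality $\|\sum_{s=t_0}^{t-1} a_s\|^2 \leq (t - t_0)\sum_{s=t_0}^{t-1}\|a_s\|^2$, then taking expectation and invoking Assumption \ref{assumption4} to replace each $\mathbb{E}\|\nabla F_k(\mathbf{w}_s^k, \xi_s^k)\|^2$ by $G^2$, yields
\begin{equation*}
\mathbb{E}\bigl\|\mathbf{w}_t^k - \mathbf{w}_{t_0}^k\bigr\|^2 \leq (t - t_0)\sum_{s=t_0}^{t-1} \eta_s^2 G^2 \leq (E-1)^2 \eta_{t_0}^2 G^2,
\end{equation*}
where the last step uses that $\eta_s$ is non-increasing and $t - t_0 \leq E - 1$.

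Finally, I would absorb the gap between $\eta_{t_0}$ and $\eta_t$ using the hypothesis $\eta_t \leq 2\eta_{t+E}$: because $t \leq t_0 + E$, we have $\eta_{t_0} \leq 2\eta_{t_0 + E} \leq 2\eta_t$, so $\eta_{t_0}^2 \leq 4\eta_t^2$. Plugging back and summing with weights $p_k$ gives the claimed bound $4\eta_t^2(E-1)^2 G^2$. The only place that requires any real care is the step-size bookkeeping in this last step -- the variance-minimization inequality and the Jensen bound on the partial sum are both standard. If I only assumed the non-increasing property without the $2\times$ relation across a window of length $E$, the bound would have to be stated in terms of $\eta_{t_0}$ rather than the cleaner $\eta_t$, which would complicate the downstream application of Lemma \ref{lemma1}.
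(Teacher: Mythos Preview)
Your proof is correct and follows exactly the standard argument. Note, however, that the paper does not supply its own proof of this lemma: Lemma~\ref{lemma3} is stated in Appendix~\ref{app:A.3} as one of the ``Key Lemmas'' taken verbatim from \cite{li2019convergence}, and only Lemma~\ref{lemma4} is actually proved in the paper. Your argument---reduce to the last synchronization point via the variance-minimization property of the weighted mean, unroll the local SGD steps, apply Jensen and Assumption~\ref{assumption4}, then use $\eta_{t_0}\le 2\eta_{t_0+E}\le 2\eta_t$ to rewrite in terms of $\eta_t$---is precisely the proof given in \cite{li2019convergence}, so there is nothing to contrast.
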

\begin{Lemma}[Unbiased sampling scheme] \label{lemma4}
    If ${\left(t+1\right)}^{th}$ round is the communication round, for our selection with $\mathcal{S}_{t}=\left\{i_{1}, \cdots, i_{m}\right\} \subset[N]$ we have
    $$
    \mathbb{E}\left[\mathbf{w}(\mathcal{S}_t)\right] = \mathbf{w}(\mathcal{K}),
    $$
    where $\mathcal{K}$ denotes the population of clients.
\end{Lemma}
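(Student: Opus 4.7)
The plan is to expand $\mathbf{w}(\mathcal{S}_t)$ into an indicator sum, push the expectation inside via linearity, and then walk through the three conditionally independent stages of HCSFed's selection (the deterministic cluster partition produced by Algorithm~\ref{alg:algorithm1}, the re-allocated sample sizes $\{m_h\}$, and the within-cluster importance draws from $P_h$) so that the per-client sampling probability cancels the per-client aggregation weight, leaving $\sum_{k=1}^{N}\omega_k\mathbf{w}_{t+1}^k=\mathbf{w}(\mathcal{K})$. Conditioning on the cluster partition is harmless because, for a fixed round $t$, the compressed gradients $\{X_t^k\}$ and hence the clusters $\{C_h\}$ are determined before the random draws take place; the only randomness that matters for the lemma is the importance sampling inside each stratum.

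First I would write
$$
\mathbf{w}(\mathcal{S}_t)\;=\;\sum_{h=1}^{\mathrm{H}}\sum_{k\in C_h} N_{k,h}\,\beta_k^{h}\,\mathbf{w}_{t+1}^k,
$$
where $N_{k,h}$ is the (random) number of times client $k$ is picked in the $m_h$ draws from $P_h$ and $\beta_k^{h}$ is the aggregation coefficient assigned to a client selected out of cluster $h$. Since each of the $m_h$ draws inside cluster $h$ picks client $k$ with probability $p_k^{h}$ given by Equation~\eqref{eq7}, linearity of expectation gives $\mathbb{E}[N_{k,h}]=m_h\,p_k^{h}$, and therefore
$$
\mathbb{E}\bigl[\mathbf{w}(\mathcal{S}_t)\bigr]\;=\;\sum_{h=1}^{\mathrm{H}}\sum_{k\in C_h} m_h\,p_k^{h}\,\beta_k^{h}\,\mathbf{w}_{t+1}^k.
$$
The remaining step is to substitute the HCSFed-specific definitions of $m_h=N_hS_h\,m\big/\sum_{h'}N_{h'}S_{h'}$ and $p_k^{h}=\|X_t^k\|\big/\sum_{j\in C_h}\|X_t^j\|$, and check that the aggregation weight satisfies the Horvitz--Thompson-style identity $m_h\,p_k^{h}\,\beta_k^{h}=\omega_k$; once this holds, the double sum telescopes into $\sum_{k=1}^{N}\omega_k\mathbf{w}_{t+1}^k$, which is exactly $\mathbf{w}(\mathcal{K})$.

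The main obstacle will be reconciling the compact scaling $\frac{N}{m}\omega_k$ written on the last line of the \textbf{Server executes} loop of Algorithm~\ref{alg:algorithm2} with the inverse-probability form $\omega_k/(m_h p_k^{h})$ required above. For the special case of uniform within-cluster probabilities and proportional allocation ($p_k^{h}=1/N_h$, $m_h=mN_h/N$) these two expressions coincide, so the bulk of the argument is to show that, under the re-allocation rule of Equation~\eqref{eq6} and the importance rule of Equation~\eqref{eq7}, the aggregation line is read as the per-stratum reweighting $\beta_k^{h}=\omega_k/(m_h p_k^{h})$, making the estimator exactly Horvitz--Thompson over the two-stage stratified-importance design. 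Once that bookkeeping is pinned down, the rest of the proof is a direct cancellation and the unbiasedness statement of Lemma~\ref{lemma4} follows; this lemma then feeds into Theorem~\ref{th2} exactly as in \cite{li2019convergence}, since, as the preceding remark notes, \texttt{HCSFed} already satisfies the variance bound of Lemma~\ref{lemma5} by Theorem~\ref{th1}.
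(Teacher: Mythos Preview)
Your approach is considerably more careful than the paper's own proof, which is literally the single line
\[
\mathbb{E}\left[\mathbf{w}(\mathcal{S}_t)\right]
=\mathbb{E}_{\mathcal{S}_{t}}\sum_{k=1}^{m}\mathbf{w}_{i_{k}}
=m\,\mathbb{E}_{\mathcal{S}_{t}}\!\left[\mathbf{w}_{i_{1}}\right]
=m\sum_{k=1}^{N}p_{k}\mathbf{w}_{k}.
\]
The paper treats the $m$ draws as identically distributed (collapsing the sum into $m$ times a single-draw expectation), never engages with the stratified structure, the re-allocated $m_h$, or the within-cluster importance weights, and does not even carry the last expression through to $\mathbf{w}(\mathcal{K})$.

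Your stratified Horvitz--Thompson decomposition is the right way to handle the actual \texttt{HCSFed} design, and the obstacle you flag is genuine rather than cosmetic: with the literal coefficient $\tfrac{N}{m}\omega_k$ on the aggregation line of Algorithm~\ref{alg:algorithm2}, the needed cancellation $m_h\,p_k^{h}\cdot\tfrac{N}{m}=1$ holds only in the special case of proportional allocation with uniform in-cluster sampling, not under Equations~\eqref{eq6}--\eqref{eq7}. The paper's one-liner sidesteps this by implicitly flattening the two-stage design into i.i.d.\ draws from a single distribution $\{p_k\}$, which is precisely the degenerate case you already note. So your route is different and strictly more informative: it makes explicit that unbiasedness for the full scheme requires the inverse-probability reweighting $\beta_k^{h}=\omega_k/(m_h p_k^{h})$ rather than the flat $\tfrac{N}{m}\omega_k$, a point the paper's proof leaves unaddressed.
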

\begin{proof}
    \begin{align}
        \mathbb{E}\left[\mathbf{w}(\mathcal{S}_t)\right]=\mathbb{E}_{\mathcal{S}_{t}} \sum_{k=1}^{m} \mathbf{w}_{i_{k}}=m \mathbb{E}_{\mathcal{S}_{t}} \left[\mathbf{w}_{i_{1}} \right]=m \sum_{k=1}^{N} p_{k} \mathbf{w}_{k}
    \end{align}
\end{proof}
\begin{Lemma}[Bounding the variance of $\mathbf{w}(\mathcal{S}_t)$] \label{lemma5}
    For $t+1 \in \mathcal{I}_{E}$, assume that $\eta_{t}$ is non-increasing and $\eta_{t} \leq 2 \eta_{t+E}$ for all $t \geq 0 .$ We have the following result
    assuming $p_{1}=p_{2}=\cdots=p_{m_{h}}=\frac{1}{N_{h}}$, the expected difference between $\overline{\mathbf{v}}_{t+1}$ and $\overline{\mathbf{w}}_{t+1}$ is bounded by
    $$
    \mathbb{E}_{\mathcal{S}_{t}}\left\|\overline{\mathbf{v}}_{t+1}-\overline{\mathbf{w}}_{t+1}\right\|^{2} \leq \frac{4}{K} \eta_{t}^{2} E^{2} G^{2}
    $$
\end{Lemma}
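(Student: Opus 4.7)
The plan is to recognize $\mathbb{E}_{\mathcal{S}_t}\|\overline{\mathbf{v}}_{t+1} - \overline{\mathbf{w}}_{t+1}\|^2$ as a sampling variance and then bound it using the bounded--gradient assumption together with the step-size monotonicity hypothesis. Lemma~\ref{lemma4} already supplies unbiasedness, $\mathbb{E}_{\mathcal{S}_t}[\overline{\mathbf{w}}_{t+1}] = \overline{\mathbf{v}}_{t+1}$, so the expected squared difference in the claim equals the variance of the sampling estimator $\overline{\mathbf{w}}_{t+1}$. Under the uniform per-cluster weights $p_k = 1/N_h$ assumed in the statement, the standard variance-of-the-sample-mean identity reduces this quantity to $\tfrac{1}{K}$ times a single-client second moment about the true mean $\overline{\mathbf{v}}_{t+1}$.

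The next move is to change the center of that second moment. Let $t_0 = t + 1 - E \in \mathcal{I}_E$ be the most recent synchronization time, so that every client shares the same starting iterate $\mathbf{w}_{t_0}^k = \overline{\mathbf{w}}_{t_0}$. Invoking the elementary inequality $\mathbb{E}\|X - \mathbb{E}X\|^2 \leq \mathbb{E}\|X - c\|^2$ with $c = \overline{\mathbf{w}}_{t_0}$ gives $\mathbb{E}\|\mathbf{v}_{t+1}^k - \overline{\mathbf{v}}_{t+1}\|^2 \leq \mathbb{E}\|\mathbf{v}_{t+1}^k - \overline{\mathbf{w}}_{t_0}\|^2$. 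Unrolling the local SGD recursion produces
$$
\mathbf{v}_{t+1}^k - \overline{\mathbf{w}}_{t_0} \;=\; -\sum_{\tau = t_0}^{t} \eta_\tau \nabla F_k(\mathbf{w}_\tau^k, \xi_\tau^k),
$$
a sum of exactly $E$ stochastic gradients. A Cauchy--Schwarz pass on this sum pulls out a factor of $E$, Assumption~\ref{assumption4} controls each $\mathbb{E}\|\nabla F_k(\mathbf{w}_\tau^k, \xi_\tau^k)\|^2$ by $G^2$, and the hypothesis $\eta_\tau \leq 2\eta_{\tau+E}$ combined with $\eta_t$ non-increasing yields $\eta_\tau \leq 2\eta_t$ uniformly on $[t_0, t]$. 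Multiplying these factors together delivers $\mathbb{E}\|\mathbf{v}_{t+1}^k - \overline{\mathbf{w}}_{t_0}\|^2 \leq 4 E^2 \eta_t^2 G^2$, and combining with the $1/K$ factor from the variance-of-the-sample-mean identity produces the stated inequality.

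The main obstacle I anticipate is bookkeeping rather than any deep conceptual barrier. The step-size chain has to be chased through the whole window $[t_0, t]$ carefully in order to extract the tight constant $4$; a looser treatment either loses this factor or introduces a dependence on the ratio $\eta_{t_0}/\eta_t$ that would destroy the clean $E^2\eta_t^2$ scaling. A secondary subtlety is that the sampling in \texttt{HCSFed} is i.i.d.\ only inside each cluster, not across the whole population, so the clean $1/K$ factor hinges on the uniform-weight assumption $p_k = 1/N_h$ invoked in the lemma, and one must verify that the outer cluster allocation $\{m_h\}$ does not inflate the constant before summing the contributions.
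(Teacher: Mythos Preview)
Your direct argument is correct and is precisely the proof given in Li et al.\ \cite{li2019convergence}, from which this lemma is borrowed: express the quantity as a sampling variance via unbiasedness (Lemma~\ref{lemma4}), use the i.i.d.\ structure to extract the $1/K$ factor, recentre at the common synchronization point $\overline{\mathbf{w}}_{t_0}$ via $\mathbb{E}\|X-\mathbb{E}X\|^2 \le \mathbb{E}\|X-c\|^2$, unroll the $E$ local SGD steps, and finish with Cauchy--Schwarz, Assumption~\ref{assumption4}, and the step-size relation $\eta_\tau \le 2\eta_{\tau+E}$ (which, combined with monotonicity and $\tau+E \ge t+1$, gives $\eta_\tau \le 2\eta_t$ on the whole window).

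The paper, however, does \emph{not} reproduce this direct proof. It simply states Lemma~\ref{lemma5} as a quotation from \cite{li2019convergence} for random selection, and then argues (in the remark after Theorem~\ref{th2}) that \texttt{HCSFed} automatically inherits the same bound because Theorem~\ref{th1} establishes $\mathbb{V}(\mathbf{w}_{hybrid}) \le \mathbb{V}(\mathbf{w}_{rand})$; since the left-hand side of Lemma~\ref{lemma5} is exactly the selection variance, the inequality for \texttt{HCSFed} follows by domination. Your approach is self-contained and makes the constant $4$ explicit; the paper's approach is shorter but relies on the reader accepting the cited result for the random-selection baseline and then transporting it via the variance comparison. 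Your worry about the cluster-wise sampling structure inflating the constant is handled in the paper's route precisely by this comparison: one never needs to redo the $1/K$ computation for the hybrid scheme, only to note that its variance is dominated by that of uniform random sampling.
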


\section{Proof of Theorem} \label{app:B}

\subsection{Proof of Theorem 1 Variance Reduction}
\paragraph{Additional Notation.}
Divide the population $\mathcal{K}$ consisting of $N$ clients into $\mathrm{H}$ clusters via clustering. 
\begin{itemize}
    \item $N_{h}$ denotes the number of clients in $h^{th}$ cluster, $s.t. \sum_{h=1}^{\mathrm{H}} {N}_{h}={N}$
    \item $m_h$ denotes the number of sampled clients from the $h^{th}$ cluster
    \item $m$ denotes the sample size, $s.t. \sum_{h=1}^{\mathrm{H}} m_h=m$ 
    \item $\mathbf{w}_{h_i}$ denotes the model update $\mathbf{w}$ of the $i^{th}$ client in the $h^{th}$ cluster
    \item $\mathbf{w}_{h} = \sum_{i=1}^{m_h} \frac{\mathbf{w}_{h_i}}{m_h}$ is the sampled averaged model update of the $h^{th}$ cluster
    \item $\overline{\mathbf{w}} = \sum_{h=1}^{\mathrm{H}} \frac{m_h{\mathbf{w}}_h}{m}$ is the overall sampled averaged model update
    \item $\mathbf{W}_{h} = \sum_{i=1}^{N_h} \frac{\mathbf{w}_{h_i}}{N_h}$ is the averaged model update of the $h^{th}$ cluster
    \item $\mathbf{W}(\mathcal{K}) = \sum_{h=1}^{\mathrm{H}} \sum_{i=1}^{N_h} \frac{\mathbf{w}_{h_i}}{N}$ is the averaged model update of entire set $\mathcal{K}$
    \item $\mathbf{w}_{cluster} = \frac{1}{N} \sum_{h=1}^{\mathrm{H}} N_{h} \mathbf{w}_{h}$ is an unbiased estimator of $\mathbf{W}(\mathcal{K})$
    \item $S^{2} = \frac{1}{N}\sum_{i=1}^{N}\left\|\mathbf{w}_{i}-\mathbf{W}(\mathcal{K})\right\|_{2}^2 := \frac{1}{N}\sum_{i=1}^{N}\sigma^{2}$
    \item $S_{h}{}^{2} = \sum_{i=1}^{N_h} \frac{\left\|\mathbf{w}_{h_i}-\mathbf{W}_{h}\right\|_{2}^2}{N_h-1}$
    \item $s_{h}{}^{2} = \sum_{i=1}^{m_h} \frac{\left\|\mathbf{w}_{h_i} - \mathbf{w}_h\right\|_{2}^2}{m_h-1}$
    \item ${Q}_h = \frac{N_h}{N}$ is the proportion of clients in the $h^{th}$ cluster
    \item ${q}_h = \frac{m_h}{m}$ is the proportion of sampled clients in the $h^{th}$ cluster
\end{itemize}
\begin{proof}[Proof of Theorem 1]
    \textbf{Derive the Variance of Random Selection.}
    Assuming that each observation has variance $\sigma^2$, then we get
    \begin{align}
    \mathbb{V}(\mathbf{w}_{rand}) &=\mathbb{E}\left\|\overline{\mathbf{w}}-\mathbf{W}(\mathcal{K})\right\|_{2}^2\\ 
    &=\frac{1}{m^{2}}\mathbb{E}\left\| \sum_{i=1}^{m}[\mathbf{w}_{i}-\mathbf{W}(\mathcal{K})]\right\|_{2}^2 \\
    &=\underbrace{\frac{1}{m^{2}}\mathbb{E} \left[\sum_{i=1}^{m}\left\|\mathbf{w}_{i}-\mathbf{W}(\mathcal{K})\right\|_{2}^{2}\right]}_{Quadratic~Term}\\
    &+\underbrace{\frac{1}{m^{2}}\mathbb{E} \left[\sum_{i}^{m} \sum_{\neq j}^{m}\left[\mathbf{w}_{i}-\mathbf{W}(\mathcal{K})\right]^T\left[\mathbf{w}_{j}-\mathbf{W}(\mathcal{K})\right]\right]}_{ Cross-Product~Term} \\
    &=\frac{1}{m^{2}} \sum_{i=1}^{m} \mathbb{E}\left\|\mathbf{w}_{i}-\mathbf{W}(\mathcal{K})\right\|_{2}^2\\
    &+\frac{1}{m^{2}} \underbrace{\sum_{i}^{m} \sum_{\neq j}^{m} \mathbb{E}\left[\left[\mathbf{w}_{i}-\mathbf{W}(\mathcal{K})\right]^T\left[\mathbf{w}_{j}-\mathbf{W}(\mathcal{K})\right]\right]}_{Setting~to~K}\\
    &=\frac{1}{m^{2}} \sum_{i=1}^{m} \sigma^{2}+\frac{K}{m^{2}} \\
    &=\frac{N-1}{Nm} S^{2}+\frac{K}{m^{2}},
\end{align}
    
where we set $K=\sum_{i}^{m} \sum_{\neq j}^{m} \mathbb{E}\left[\left[\mathbf{w}_{i}-\mathbf{W}(\mathcal{K})\right]^{T}\left[\mathbf{w}_{j}-\mathbf{W}(\mathcal{K})\right]\right]$ for convenience.

\textbf{Find the Expression of $K$.}
    In order to find $K$, we consider,
    \begin{align}
        \mathbb{E} \left[\left[\mathbf{w}_{i}-\mathbf{W}(\mathcal{K})\right]\left[\mathbf{w}_{j}-\mathbf{W}(\mathcal{K})\right]\right]
        =\frac{1}{N(N-1)} \sum_{k}^{N} \sum_{\neq \ell}^{N} \left[\left[\mathbf{w}_{k}-\mathbf{W}(\mathcal{K})\right]^T\left[(\mathbf{w}_{l}-\mathbf{W}(\mathcal{K})\right]\right].
    \end{align}
    Meanwhile, we have,
    \begin{align}
        \sum_{k=1}^{N}\left[\mathbf{w}_{k}-\mathbf{W}(\mathcal{K})\right] &=  \sum_{k=1}^{N}\mathbf{w}_{k}-N \mathbf{W}(\mathcal{K}) \\
        &= N \mathbf{W}(\mathcal{K})-N \mathbf{W}(\mathcal{K}) = 0,
    \end{align}
    i.e.,
    \begin{align}
         \left\|\sum_{k=1}^{N}[\mathbf{w}_{k}-\mathbf{W}(\mathcal{K})]\right\|_{2}^{2} = 0.
    \end{align}
    And the left can be constructed as, 
    \begin{align}
        \left\|\sum_{k=1}^{N}[\mathbf{w}_{k}-\mathbf{W}(\mathcal{K})]\right\|_{2}^{2} &=\sum_{k=1}^{N}\left\|\mathbf{w}_{k}-\mathbf{W}(\mathcal{K})\right\|_{2}^{2} \\
        &+\sum_{k}^{N} \sum_{\neq \ell}^{N}\left[\left[\mathbf{w}_{k}-\mathbf{W}(\mathcal{K})\right]^T\left[\mathbf{w}_{\ell}-\mathbf{W}(\mathcal{K})\right]\right].
    \end{align}
    Simplify it, we will get
    \begin{align}
        0 &=(N-1) S^{2}+\sum_{k}^{N} \sum_{\neq \ell}^{N}\left[\left[\mathbf{w}_{k}-\mathbf{W}(\mathcal{K})\right]^T\left[\mathbf{w}_{\ell}-\mathbf{W}(\mathcal{K})\right]\right],
    \end{align}
    equal to, 
     \begin{align}
        \sum_{k}^{N} \sum_{\neq \ell}^{N}\left[\left[\mathbf{w}_{k}-\mathbf{W}(\mathcal{K})\right]^T\left[\mathbf{w}_{\ell}-\mathbf{W}(\mathcal{K})\right]\right] = -(N-1) S^{2}.
    \end{align}
    Therefore,
    \begin{align}
        \frac{1}{N(N-1)} \sum_{k}^{N} &\sum_{\neq \ell}^{N}\left[\left[\mathbf{w}_{k}-\mathbf{W}(\mathcal{K})\right]^T\left[\mathbf{w}_{\ell}-\mathbf{W}(\mathcal{K})\right]\right] \\
        &=\frac{1}{N(N-1)}\left[-(N-1) S^{2}\right] \\
        &=-\frac{S^{2}}{N},
    \end{align}
    thus 
    \begin{align}
        K&=\sum_{i}^{m} \sum_{\neq j}^{m} \mathbb{E}\left[\left[\mathbf{w}_{i}-\mathbf{W}(\mathcal{K})\right]^T\left[\mathbf{w}_{j}-\mathbf{W}(\mathcal{K})\right]\right]\\
        &= m(m-1) \frac{1}{N(N-1)} \sum_{k}^{N} \sum_{\neq \ell}^{N}\left[\left[\mathbf{w}_{k}-\mathbf{W}(\mathcal{K})\right]^T\left[(\mathbf{w}_{l}-\mathbf{W}(\mathcal{K})\right]\right]\\
        &= -m(m-1)\frac{S^2}{N},
    \end{align}
    and substitute the value of $K$, the variance of $\mathbf{w}_{rand}$ is 
    \begin{align}
    \mathbb{V}\left(\mathbf{w}_{rand}\right) &=\frac{N-1}{N m} S^{2}-\frac{1}{n^{2}} m(m-1) \frac{S^{2}}{N} \\
    &=\frac{N-m}{N m} S^{2}.
    \end{align}
   If $N$ is infinite (large enough), we can get
    \begin{align}
        \mathbb{V}\left(\mathbf{w}_{rand}\right) &=\frac{N-m}{N m} S^{2} \\
        &= (\frac{1}{m}-\frac{1}{N})S^2 \cong \frac{S^2}{m}.
    \end{align}
\textbf{Derive the Variance of Plain Clustering Selection.} As prior work constructed, clustering selection is always applied under plain proportional allocation, where the number of sampled clients $m_{h}$ from the $h^{th}$ cluster is proportional to its cluster size $N_{h}$, i.e., $m_{h}=m\frac{N_{h}}{N}$. And we have 
\begin{align}
    \mathbb{V}(\mathbf{w}_{cluster})
    =\sum_{h=1}^{\mathrm{H}} Q_{h}{}^{2} \mathbb{V}\left(\mathbf{w}_{h}\right)+\sum_{h(\neq j)=1}^{\mathrm{H}} \sum_{j=1}^{m_{h}} Q_{h} Q_{j} \operatorname{Cov}\left(\mathbf{w}_{h}, \mathbf{w}_{j}\right).
\end{align}
For the former we have
\begin{align}
     \mathbb{V}\left(\mathbf{w}_{h}\right)&=\frac{N_{h}-m_{h}}{N_{h} m_{h}} S_{h}{}^{2},
\end{align}
and for the latter (covariance) we have
\begin{align}
    \operatorname{Cov}\left(\mathbf{w}_{h}, \mathbf{w}_{j}\right)&=0, h \neq j,
\end{align}
where
\begin{align}
    S_{h}{}^{2}&=\frac{1}{N_{h}-1} \sum_{j=1}^{N_{h}}\left\|\mathbf{w}_{h_{ j}}-\mathbf{W}_{h}\right\|_{2}^{2},
\end{align}
thus
\begin{align}
    \mathbb{V}(\mathbf{w}_{cluster})&=\sum_{h=1}^{\mathrm{H}}\left(\frac{N_{h}-m_{h}}{N_{h} m_{h}}\right) {Q}_{h}{}^{2} S_{h}{}^{2}.
\end{align}
Therefore we can get
\begin{align}
    \mathbb{V}(\mathbf{w}_{cluster})
    &=\sum_{h=1}^{\mathrm{H}}\left(\frac{N_{h}-\frac{m}{N} N_{h}}{N_{h}\frac{m}{N} N_{h}}\right)\left(\frac{N_{h}}{N}\right)^{2} S_{h}{}^{2} \\
    &= \frac{N-m}{N m} \sum_{h=1}^{\mathrm{H}} \frac{N_{h} S_{h}{}^{2}}{N} \\
    &= \frac{N-m}{N m} \sum_{h=1}^{\mathrm{H}} {Q}_{h} S_{h}{}^{2} \\
    &\cong \frac{\sum_{h=1}^{\mathrm{H}}N_{h}S_{h}{}^{2}}{mN}.
\end{align}

\textbf{Derive the Variance of Clustering Selection with Sample Size Re-allocation.} 
We apply clustering selection under sample size re-allocation, where the number of sampled clients $m_{h}$ from the $h^{th}$ cluster is proportional to both cluster's size $N_{h}$ and the variability of cluster measured by $S_{h}$, i.e.,
\begin{align}
m_{h}=\frac{N_{h}S_{h}}{\sum_{h=1}^{\mathrm{H}} N_{h} S_{h}} \cdot m.
\end{align}
We can get
\begin{align}
\mathbb{V}\left(\mathbf{w}_{cludiv}\right) &=\sum_{h=1}^{\mathrm{H}}\left(\frac{1}{m_{h}}-\frac{1}{N_{h}}\right) Q_{h}{}^{2} S_{h}{}^{2} \\
&=\sum_{h=1}^{\mathrm{H}} \frac{Q_{h}{}^{2} S_{h}{}^{2}}{m_{h}}-\sum_{h=1}^{\mathrm{H}} \frac{Q_{h}{}^{2} S_{h}{}^{2}}{N_{h}} \\
&=\sum_{h=1}^{\mathrm{H}}\left[Q_{h}{}^{2} S_{h}{}^{2}\left(\frac{\sum_{h=1}^{\mathrm{H}} N_{h} S_{h}}{m N_{h} S_{h}}\right)\right]-\sum_{h=1}^{\mathrm{H}} \frac{Q_{h}{}^{2} S_{h}{}^{2}}{N_{h}} \\
&=\sum_{h=1}^{\mathrm{H}}\left[\frac{1}{m} \cdot \frac{N_{h} S_{h}}{N^{2}}\left(\sum_{h=1}^{\mathrm{H}} N_{h} S_{h}\right)\right]-\sum_{h=1}^{\mathrm{H}} \frac{Q_{h}{}^{2} S_{h}{}^{2}}{N_{h}} \\
&=\frac{1}{m}\left(\sum_{h=1}^{\mathrm{H}} \frac{N_{h} S_{h}}{N}\right)^{2}-\sum_{h=1}^{\mathrm{H}} \frac{Q_{h}{}^{2} S_{h}{}^{2}}{N_{h}}\\
&=\frac{1}{m}\left(\sum_{h=1}^{\mathrm{H}} Q_{h} S_{h}\right)^{2}-\frac{1}{N} \sum_{h=1}^{\mathrm{H}} Q_{h} S_{h}{}^{2}\\
&=\frac{1}{N^2}\frac{\left(\sum_{h=1}^{\mathrm{H}}N_{h}S_{h}\right)^2}{m} - \frac{1}{N^2}\sum_{h=1}^{\mathrm{H}}{N_{h}S_{h}{}^2}\\
&\cong \frac{1}{mN^2}\left(\sum_{h=1}^{\mathrm{H}}N_{h}S_{h}\right)^2.
\end{align}
Based on all the above, We have these equations below when approximations are used,
\begin{align}
\mathbb{V}\left(\mathbf{w}_{rand}\right) &= \frac{N-m}{Nm}S^2 \\
&\cong \frac{S^2}{m},\\
\mathbb{V}\left(\mathbf{w}_{cluster}\right) &= \frac{N-m}{N}\cdot\frac{\sum_{h=1}^{\mathrm{H}}N_{h}S_{h}{}^{2}}{mN} \\
&\cong \frac{\sum_{h=1}^{\mathrm{H}}N_{h}S_{h}{}^{2}}{mN},\\
\mathbb{V}\left(\mathbf{w}_{cludiv}\right) &= \frac{1}{N^2}\cdot\frac{\left(\sum_{h=1}^{\mathrm{H}}N_{h}S_{h}\right)^2}{m} - \frac{1}{N^2}\sum_{h=1}^{\mathrm{H}}{N_{h}S_{h}{}^2}\\
&\cong \frac{1}{mN^2}\left(\sum_{h=1}^{\mathrm{H}}N_{h}S_{h}\right)^2.
\end{align}

\textbf{Relationship.}
In order to compare $\mathbb{V}({w}_{rand})$ and $\mathbb{V}({w}_{cluster})$, we first attempt to express $S^{2}$ as a function of $S_{h}{}^{2}$.
    \begin{align}
        \left(N-1\right)S^2 &= \sum_{h=1}^{\mathrm{H}}\sum_{i=1}^{m_h}\left\|\mathbf{w}_{h_i}-\mathbf{W}(\mathcal{K})\right\|_{2}^2     \\
        &= \sum_{h=1}^{\mathrm{H}}\sum_{i=1}^{m_h}\left\|\mathbf{w}_{h_i}-\mathbf{W}_h\right\|_{2}^2\\
        &+ \sum_{h=1}^{\mathrm{H}}N_{h}\left\|\mathbf{W}_h-\mathbf{W}(\mathcal{K})\right\|_{2}^2\\
        &= \sum_{h=1}^{\mathrm{H}}\left(N_{h}-1\right)S_{h}{}^{2} + \sum_{h=1}^{\mathrm{H}}N_{h}\left\|\mathbf{W}_h -\mathbf{W}(\mathcal{K})\right\|_{2}^2
    \end{align}
    \begin{align}
        \frac{N-1}{N} S^{2}=\sum_{h=1}^{\mathrm{H}} \frac{N_{h}-1}{N} S_{h}{}^{2}+\sum_{h=1}^{\mathrm{H}} \frac{N_{h}}{N}\left\|\mathbf{W}_{h}-\mathbf{W}(\mathcal{K})\right\|_{2}^{2}
    \end{align}
We assume that $N_{h}$ is large enough to permit the approximation for simplification
    \begin{align}
        \frac{N_{h}-1}{N_{h}} \approx 1 \text { and } \frac{N-1}{N} \approx 1
    \end{align}
Thus
    \begin{align}
    {S}^{2}=\sum_{h=1}^{\mathrm{H}} \frac{{N}_{h}}{N} {S}_{h}{}^{2}+\sum_{h=1} ^{\mathrm{H}}\frac{{N}_{h}}{N}\left\|\mathbf{W}_{h}-\mathbf{W}(\mathcal{K})\right\|_{2}^{2}
    \end{align}
Therefore
    \begin{align}
    \mathbb{V}\left(\mathbf{w}_{rand}\right)=\frac{S^{2}}{m}&=\frac{\sum_{h=1}^{\mathrm{H}} N_{h} S_{h}{ }^{2}}{mN}\\
    &+\frac{\sum_{h=1}^{\mathrm{H}} N_{h}\left\|\mathbf{W}_{h}-\mathbf{W}(\mathcal{K})\right\|_{2}^{2}}{mN}\\
    &=\mathbb{V}\left(\mathbf{w}_{cluster}\right)+\frac{\sum_{h=1}^{\mathrm{H}} N_{h}\left\|\mathbf{W}_{h}-\mathbf{W}(\mathcal{K})\right\|_{2}^{2}}{mN}
    \end{align}
which shows that
    \begin{equation}
    \mathbb{V}\left(\mathbf{w}_{cluster}\right) \leq \mathbb{V}\left(\mathbf{w}_{rand}\right)
    \end{equation}
Unless $\mathbf{W}_h=\mathbf{W}(\mathcal{K})$ for every h, we must have $\mathbb{V}\left(\mathbf{w}_{cludiv}\right) \leq \mathbb{V}\left(\mathbf{w}_{cluster}\right)$.\\
The difference is 
    \begin{align}
\mathbb{V}\left(\mathbf{w}_{cluster}\right)&=\mathbb{V}\left(\mathbf{w}_{cludiv}\right)+\frac{1}{mN}\sum_{h=1}^{\mathrm{H}}N_{h}\left(S_{h}-\overline{S}\right)^2.
    \end{align}
This shows that 
    \begin{equation}
    \mathbb{V}\left(\mathbf{w}_{cludiv}\right) \leq \mathbb{V}\left(\mathbf{w}_{cluster}\right),
    \end{equation}
unless $S_h=\overline{S}$ for every h, i.e., the clusters have equal variability. Therefore, we get 
    \begin{equation}
    \mathbb{V}\left(\mathbf{w}_{cludiv}\right) \leq \mathbb{V}\left(\mathbf{w}_{cluster}\right)\leq
\mathbb{V}\left(\mathbf{w}_{rand}\right)
    \end{equation}
In this paper, we proposed to apply the importance selection based on the norm of the gradient to each cluster instead of random selection. Here we present the variance-reduction relationship between random selection and importance selection. Please note that this part is directly adapted from prior importance sampling work~\cite{katharopoulos2018not}, which is not our contribution.
\begin{align}
    \operatorname{Tr}\left(\mathbb{V}_{rand}\left[G_{i}\right]\right)&-\operatorname{Tr}\left(\mathbb{V}_{import}\left[c_{i} G_{i}\right]\right) \\
&=\mathbb{E}_{rand}\left[\left\|G_{i}\right\|_{2}^{2}\right]-\mathbb{E}_{import}\left[c_{i}^{2}\left\|G_{i}\right\|_{2}^{2}\right]
\end{align}
Using the fact that $c_{i}=\frac{1}{N_{h} G_{i}}$, $I_{i} = \frac{\left\|G_{i}\right\|_{2}}{\sum^{N_h}_{i=1}\left\|G_i\right\|_{2}}$, $u=\frac{1}{N_{h}}$, we have
\begin{equation}
    \mathbb{E}_{import}\left[c_{i}^{2}\left\|G_{i}\right\|_{2}^{2}\right]=\left(\frac{1}{N_{h}} \sum_{i=1}^{N_{h}}\left\|G_{i}\right\|_{2}\right)^{2}
\end{equation}
Then simplify it, we can get 
\begin{align}
    &\operatorname{Tr}\left(\mathbb{V}_{rand}\left[G_{i}\right]\right)-\operatorname{Tr}\left(\mathbb{V}_{import}\left[w_{i} G_{i}\right]\right) \\
&=\frac{1}{N_{h}} \sum_{i=1}^{N_{h}}\left\|G_{i}\right\|_{2}^{2}-\left(\frac{1}{N_{h}} \sum_{i=1}^{N_{h}}\left\|G_{i}\right\|_{2}\right)^{2} \\
&=\frac{\left(\sum_{i=1}^{N_{h}}\left\|G_{i}\right\|_{2}\right)^{2}}{N_{h}^{3}} \sum_{i=1}^{N_{h}}\left(N_{h}^{2} \frac{\left\|G_{i}\right\|_{2}^{2}}{\left(\sum_{i=1}^{N_{h}}\left\|G_{i}\right\|_{2}\right)^{2}}-1\right) \\
&=\frac{\left(\sum_{i=1}^{N_{h}}\left\|G_{i}\right\|_{2}\right)^{2}}{N_{h}} \sum_{i=1}^{N_{h}}\left(I_{i}^{2}-u^{2}\right)
\end{align}
Using the fact that$\sum_{i=1}^{N_{h}} u=1$,  we can complete the derivation.
\begin{align}
    &\operatorname{Tr}\left(\mathbb{V}_{rand}\left[G_{i}\right]\right)-\operatorname{Tr}\left(\mathbb{V}_{import}\left[w_{i} G_{i}\right]\right) \\
&=\frac{\left(\sum_{i=1}^{N_{h}}\left\|G_{i}\right\|_{2}\right)^{2}}{N_{h}} \sum_{i=1}^{N_{h}}\left(I_{i}-u\right)^{2} \\
&=\left(\frac{1}{N_{h}} \sum_{i=1}^{}\left\|G_{i}\right\|_{2}\right)^{2} N_{h} \|I-u\|_{2}^{2} 
\end{align}
\end{proof}

\clearpage

\section{Additional Experiments} \label{app:C}
\subsection{Influence of the Sampling Ratio}
As the sampling ratio $q$ decreases, the performance of all schemes becomes unacceptable except ours.
\begin{table}[htbp]
  \vspace{-1.0em}  
  \centering
  \setlength{\tabcolsep}{4mm} 
  \caption{Final test accuracy of multiple FL algorithms  with different sampling schemes under convex model on MNIST, FMNIST, setting parameters $q \in \{0.1, 0.2, 0.3, 0.5\}$, $N=100$, $nSGD=50$, $\eta=0.05$, $B=50$.}
\scalebox{0.98}{
\begin{tabular}
  {llcccc} \toprule
  \multirow{2}*{} &\multirow{2}*{Methods} &\multicolumn{2}{c}{MNIST} &\multicolumn{2}{c}{FMNIST} \\
  \cmidrule(r){3-4} \cmidrule(r){5-6}
  ~ &~
  &IID&\cellcolor{white}non-IID
  &IID&\cellcolor{white}non-IID  
  \\ \midrule
  \multirow{5}*{q=0.1} &{Random} & 86.8 \color{gray}±0.0 & 79.3 \color{gray}±0.6 & 75.9 \color{gray}±0.0 & 62.6 \color{gray}±1.0 \\ 
  &\texttt{SCAFFOLD} & 84.2 \color{gray}±0.2 & 78.8 \color{gray}±1.3 & 73.5 \color{gray}±0.0 & 70.7 \color{gray}±0.0 \\
  &{Importance}  & 90.9 \color{gray}±0.0 & 87.2 \color{gray}±1.2 & 82.5 \color{gray}±0.1 & 73.2 \color{gray}±1.6 \\
  &{Cluster}  & 90.91 \color{gray}±0.0 & 88.0 \color{gray}±0.4 & 82.6 \color{gray}±0.1 & 74.3 \color{gray}±2.0 \\
  &{\textbf{\texttt{HCSFed}}}  & 90.9 \color{gray}±0.0 & \textbf{89.0 \color{gray}±0.0} & 82.5 \color{gray}±0.1 & \textbf{78.8 \color{gray}±0.0} \\ \midrule
  \multirow{5}*{q=0.2} &{Random}& 88.4 \color{gray}±0.0 & 83.2 \color{gray}±0.3 & 78.6 \color{gray}±0.1 & 71.6 \color{gray}±1.1 \\ 
  &\texttt{SCAFFOLD} & 85.2 \color{gray}±0.0 & 86.6 \color{gray}±0.2 & 74.2 \color{gray}±0.0 & 71.0 \color{gray}±0.0 \\
  &{Importance}  & 90.8 \color{gray}±0.0 & 87.5 \color{gray}±1.0 & 82.6 \color{gray}±0.1 & 75.1 \color{gray}±2.0 \\
  &{Cluster}  & 90.89 \color{gray}±0.0 & 88.6 \color{gray}±0.3 & 82.5 \color{gray}±0.1 & 77.3 \color{gray}±1.1 \\
  &{\textbf{\texttt{HCSFed}}}  & 91.0 \color{gray}±0.0 & \textbf{89.1 \color{gray}±0.1} & \textbf{82.5 \color{gray}±0.1} & \textbf{78.9 \color{gray}±0.1} \\ \midrule
  \multirow{5}*{q=0.3} &{Random}& 89.3 \color{gray}±0.0 & 86.1 \color{gray}±0.2 & 80.1 \color{gray}±0.1 & 71.3 \color{gray}±0.2 \\ 
  &\texttt{SCAFFOLD} & 84.9 \color{gray}±0.1 & 87.0 \color{gray}±0.1 & 74.4 \color{gray}±0.0 & 71.1 \color{gray}±0.0 \\
  &{Importance}  & 90.8 \color{gray}±0.0 & 88.4 \color{gray}±0.4 & 82.6 \color{gray}±0.0 & 75.4 \color{gray}±1.9 \\
  &{Cluster}  & 90.85 \color{gray}±0.0 & 89.1 \color{gray}±0.1 & 82.5 \color{gray}±0.0 & 77.4 \color{gray}±0.5 \\
  &{\textbf{\texttt{HCSFed}}}  & 90.9 \color{gray}±0.0 & \textbf{89.0 \color{gray}±0.0} & 82.6 \color{gray}±0.1 & \textbf{78.9 \color{gray}±0.0} \\ \midrule
  \multirow{5}*{q=0.5} &{Random}& 90.0 \color{gray}±0.0 & 87.2 \color{gray}±0.1 & 81.2 \color{gray}±0.0 & 75.5 \color{gray}±0.3 \\ 
  &\texttt{SCAFFOLD} & 85.0 \color{gray}±0.1 & 87.2 \color{gray}±0.1 & 74.5 \color{gray}±0.0 & 70.9 \color{gray}±0.0 \\
  &{Importance}  & 90.9 \color{gray}±0.0 & 89.0 \color{gray}±0.2 & 82.5 \color{gray}±0.0 & 77.2 \color{gray}±1.0 \\
  &{Cluster}  & 90.87 \color{gray}±0.0 & 89.0 \color{gray}±0.1 & 82.5 \color{gray}±0.0 & 78.7 \color{gray}±0.4 \\
  &{\textbf{\texttt{HCSFed}}}  & 90.9 \color{gray}±0.0 & \textbf{89.0 \color{gray}±0.0} & 82.5 \color{gray}±0.0 & \textbf{78.9 \color{gray}±0.0} \\ 
  \bottomrule
  \end{tabular}}
\label{tab3}
\end{table}

\begin{table}[htbp]
  \vspace{-1.0em}  
  \centering
  \setlength{\tabcolsep}{1mm} 
  \caption{Final test accuracy of multiple FL algorithms with different sampling schemes under non-convex model on MNIST, FMNIST and CIFAR-10, setting parameters $q \in \{0.1, 0.2, 0.3, 0.5\}$, $N=100$, $nSGD=50$ for MNIST and FMNIST, $nSGD=80$ for CIFAR-10, $\eta=0.05$, $B=50$.}
\scalebox{0.98}{
\begin{tabular}
  {llccccccc} \toprule
  \multirow{2}*{} &\multirow{2}*{Methods} &\multicolumn{2}{c}{MNIST} &\multicolumn{2}{c}{FMNIST} &\multicolumn{3}{c}{CIFAR-10}\\
  \cmidrule(r){3-4} \cmidrule(r){5-6} \cmidrule(r){7-9}
  ~ &~
  &IID&\cellcolor{white}non-IID
  &IID&\cellcolor{white}non-IID  
  & \cellcolor{white}IID &$\alpha=0.01$&$\alpha=0.001$ \\ \midrule
  \multirow{5}*{q=0.1} &{Random} & 87.0 \color{gray}±0.0 & 59.7 \color{gray}±1.4 & 87.6 \color{gray}±0.1 & 76.9 \color{gray}±0.2 & 40.3 \color{gray}±0.2 & 25.8 \color{gray}±0.4 & 20.5 \color{gray}±0.5\\ 
  &{Importance}  & 92.9 \color{gray}±0.1 & 71.2 \color{gray}±9.5 & 90.4 \color{gray}±0.1 & 85.1 \color{gray}±2.2 & 66.0 \color{gray}±0.2 & 39.5 \color{gray}±3.3 & 24.9 \color{gray}±2.6\\
  &{Cluster}  & 92.9 \color{gray}±0.0 & 73.6 \color{gray}±3.7 & 90.6 \color{gray}±0.2 & 88.9 \color{gray}±4.6 & 65.5 \color{gray}±0.3 & 37.2 \color{gray}±3.9 & 30.0 \color{gray}±4.7\\
  &{\textbf{\texttt{HCSFed}}}  & 92.9 \color{gray}±0.0 & \textbf{83.3 \color{gray}±0.0} & 90.5 \color{gray}±0.1 & \textbf{92.0 \color{gray}±0.2} & 65.7 \color{gray}±0.3 & \textbf{41.2 \color{gray}±1.8} & \textbf{38.8 \color{gray}±0.6} \\ \midrule
  \multirow{5}*{q=0.2} &{Random} & 89.3 \color{gray}±0.0 & 70.8 \color{gray}±1.6 & 88.8 \color{gray}±0.0 & 80.5 \color{gray}±0.5 & 49.8 \color{gray}±0.4 & 29.7 \color{gray}±0.3 & 25.1 \color{gray}±0.2\\ 
  &{Importance}  & 92.9 \color{gray}±0.0 & 72.9 \color{gray}±9.0 & 90.3 \color{gray}±0.0 & 90.8 \color{gray}±0.6 & 65.7 \color{gray}±0.2 & 40.7 \color{gray}±2.9 & 30.5 \color{gray}±2.0\\
  &{Cluster}  & 92.9 \color{gray}±0.0 & 80.3 \color{gray}±1.7 & 90.4 \color{gray}±0.1 & 90.7 \color{gray}±1.0 & 65.6 \color{gray}±0.3 & 42.0 \color{gray}±1.4 & 33.0 \color{gray}±3.6\\
  &{\textbf{\texttt{HCSFed}}}  & 92.8 \color{gray}±0.0 & \textbf{83.5 \color{gray}±0.1} & 90.4 \color{gray}±0.1 & \textbf{92.1 \color{gray}±0.2} & 65.4 \color{gray}±0.3 & \textbf{41.6 \color{gray}±0.9} & \textbf{39.2 \color{gray}±2.0} \\ \midrule
  \multirow{5}*{q=0.3} &{Random} & 90.1 \color{gray}±0.0 & 73.4 \color{gray}±2.1 & 89.4 \color{gray}±0.0 & 83.7 \color{gray}±0.3 & 54.8 \color{gray}±0.4 & 34.9 \color{gray}±0.6 & 26.6 \color{gray}±0.5\\ 
  &{Importance}  & 92.9 \color{gray}±0.0 & 76.3 \color{gray}±3.5 & 90.6 \color{gray}±0.1 & 90.5 \color{gray}±1.3 & 65.7 \color{gray}±0.2 & 42.0 \color{gray}±2.3 & 32.6 \color{gray}±3.4\\
  &{Cluster}  & 92.9 \color{gray}±0.0 & 81.8 \color{gray}±1.5 & 90.5 \color{gray}±0.1 & 91.6 \color{gray}±0.5 & 66.1 \color{gray}±0.3 & 43.0 \color{gray}±1.0 & 34.6 \color{gray}±2.1\\
  &{\textbf{\texttt{HCSFed}}}  & 92.8 \color{gray}±0.0 & \textbf{83.3 \color{gray}±0.1} & 90.2 \color{gray}±0.1 & \textbf{92.2 \color{gray}±0.1} & 65.4 \color{gray}±0.3 & \textbf{42.3 \color{gray}±0.6} & \textbf{39.7 \color{gray}±0.7} \\ \midrule
  \multirow{5}*{q=0.5} &{Random} & 91.4 \color{gray}±0.0 & 80.9 \color{gray}±1.3 & 90.1 \color{gray}±0.1 & 87.9 \color{gray}±0.3 & 61.1 \color{gray}±0.2 & 39.4 \color{gray}±0.4 & 30.5 \color{gray}±0.6\\ 
  &{Importance}  & 92.9 \color{gray}±0.0 & 80.8 \color{gray}±3.8 & 90.6 \color{gray}±0.1 & 90.5 \color{gray}±1.5 & 65.8 \color{gray}±0.3 & 43.2 \color{gray}±1.2 & 35.4 \color{gray}±1.7\\
  &{Cluster}  & 92.9 \color{gray}±0.0 & 83.5 \color{gray}±0.2 & 90.4 \color{gray}±0.1 & 92.0 \color{gray}±0.2 & 65.9 \color{gray}±0.4 & 44.0 \color{gray}±0.6 & 36.3 \color{gray}±1.0\\
  &{\textbf{\texttt{HCSFed}}}  & 92.9 \color{gray}±0.0 & \textbf{83.3 \color{gray}±0.1} & 90.4 \color{gray}±0.1 & \textbf{92.0 \color{gray}±0.2} & 65.7 \color{gray}±0.5 & \textbf{42.4 \color{gray}±0.7} & \textbf{39.8 \color{gray}±0.6} \\ 
  \bottomrule
  \end{tabular}}
\label{tab4}
\end{table}
\begin{figure}[!htbp]
    \centering
    \includegraphics[width=13.5cm]{./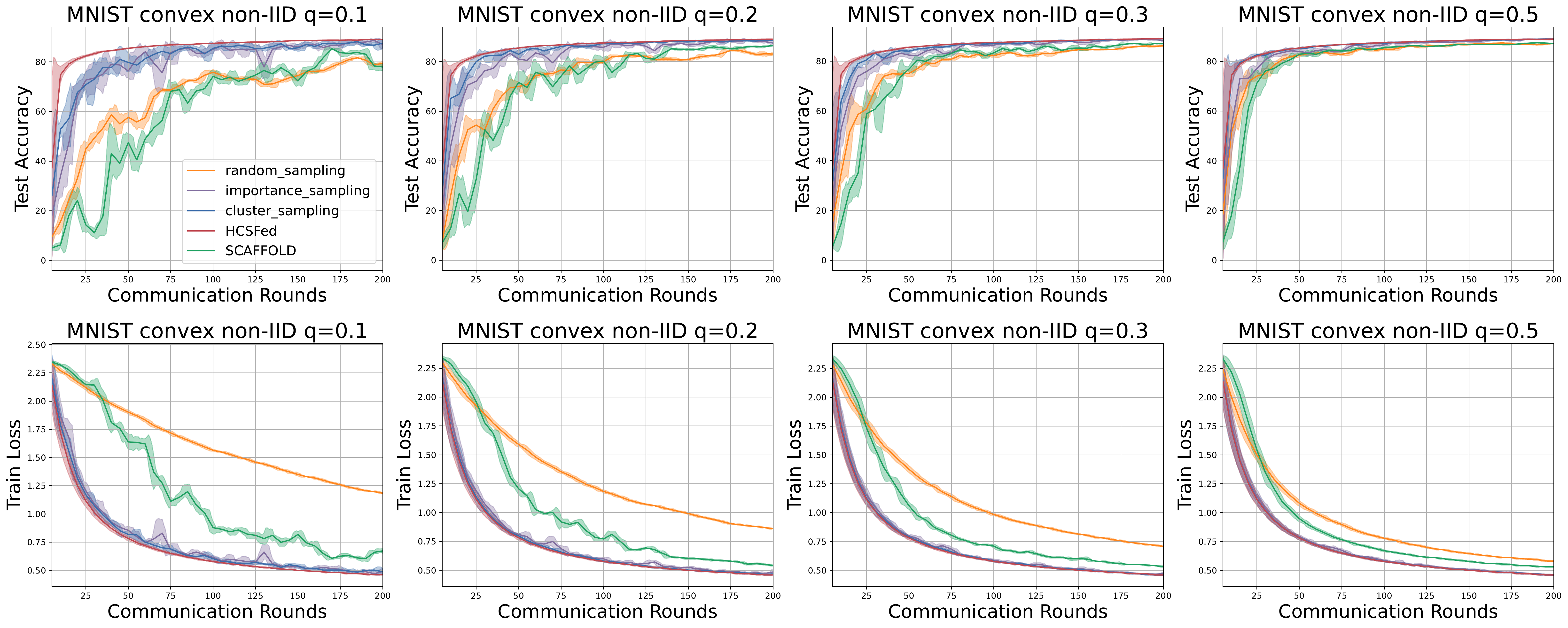}
    \caption{Impact of sampling ratio $q$ on the performance with convex model. We compare \texttt{HCSFed} with simple random sampling, importance sampling, cluster sampling, SCAFFOLD on MNIST under non-IID, setting parameters $q\in \{0.1, 0.2, 0.3, 0.5\}$, $N=100$, $nSGD=50$, $\eta=0.01$, $B=50$.}
    \label{app:fig1}
\end{figure}
\begin{figure}[!htbp]
    \centering
    \includegraphics[width=13.5cm]{./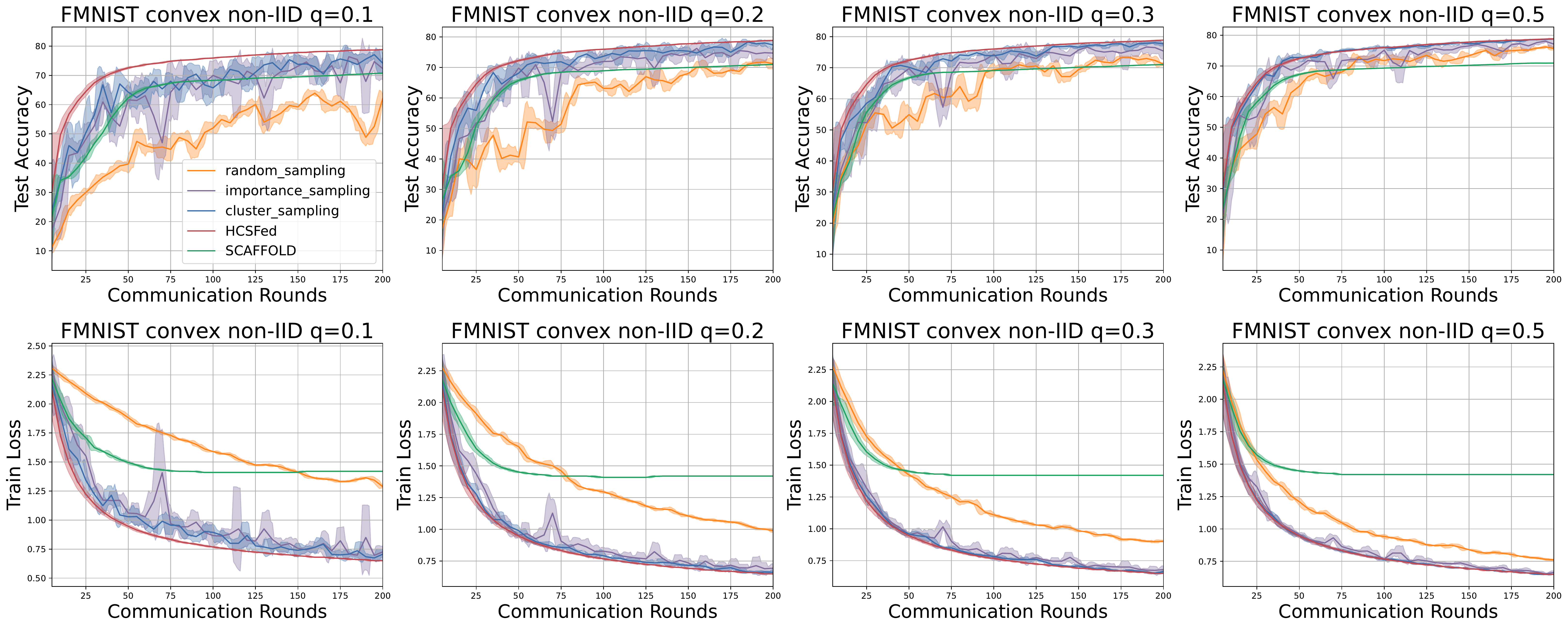}
    \caption{Impact of sampling ratio $q$ on the performance with convex model. We compare \texttt{HCSFed} with simple random sampling, importance sampling, cluster sampling, SCAFFOLD on FMNIST under non-IID, setting parameters $q\in \{0.1, 0.2, 0.3, 0.5\}$, $N=100$, $nSGD=50$, $\eta=0.01$, $B=50$.}
    \label{app:fig2}
\end{figure}
\begin{figure}[!htbp]
    \centering
    \includegraphics[width=13.5cm]{./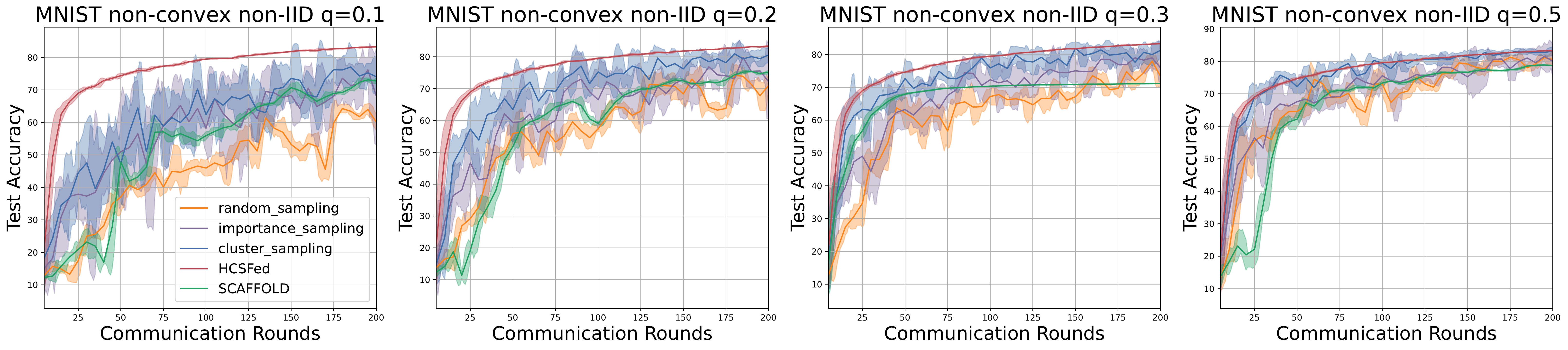}
    \caption{Impact of sampling ratio $q$ on the performance with non-convex model. We compare \texttt{HCSFed} with simple random sampling, importance sampling, cluster sampling, SCAFFOLD on MNIST under non-IID, setting parameters $q\in \{0.1, 0.2, 0.3, 0.5\}$, $N=100$, $nSGD=50$, $\eta=0.01$, $B=50$.}
    \label{app:fig3}
\end{figure}
\clearpage
\begin{figure}[!htbp]
    \centering
    \includegraphics[width=13.5cm]{./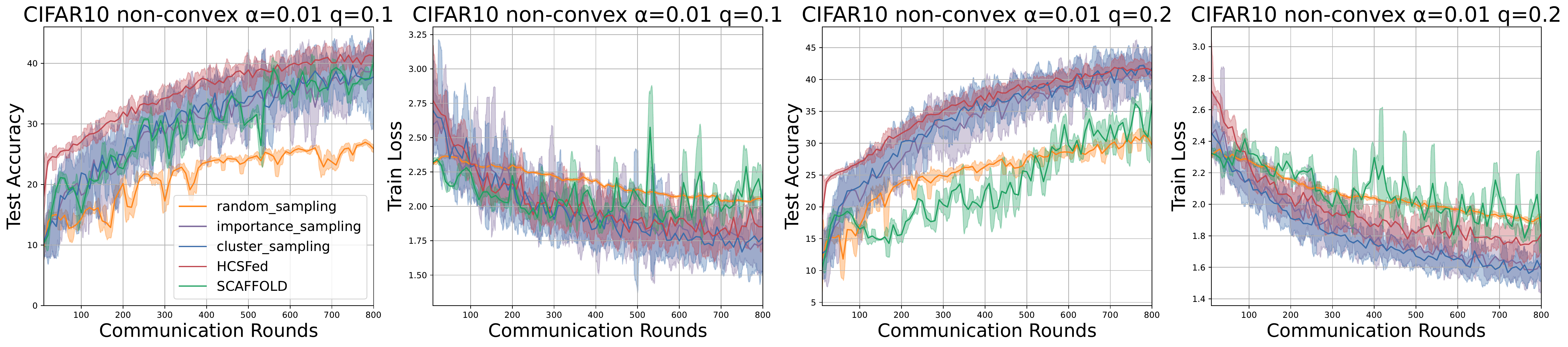}
    \caption{Impact of sampling ratio $q$ on the performance with non-convex model. We compare \texttt{HCSFed} with simple random sampling, importance sampling, cluster sampling, SCAFFOLD on CIFAR-10, using a Dirichlet Distribution with $\alpha=0.01$, setting parameters $q\in \{0.1, 0.2\}$, $N=100$, $nSGD=80$, $\eta=0.05$, $B=50$.}
    \label{app:fig4}
\end{figure}
\begin{figure}[!htbp]
    \centering
    \includegraphics[width=13.5cm]{./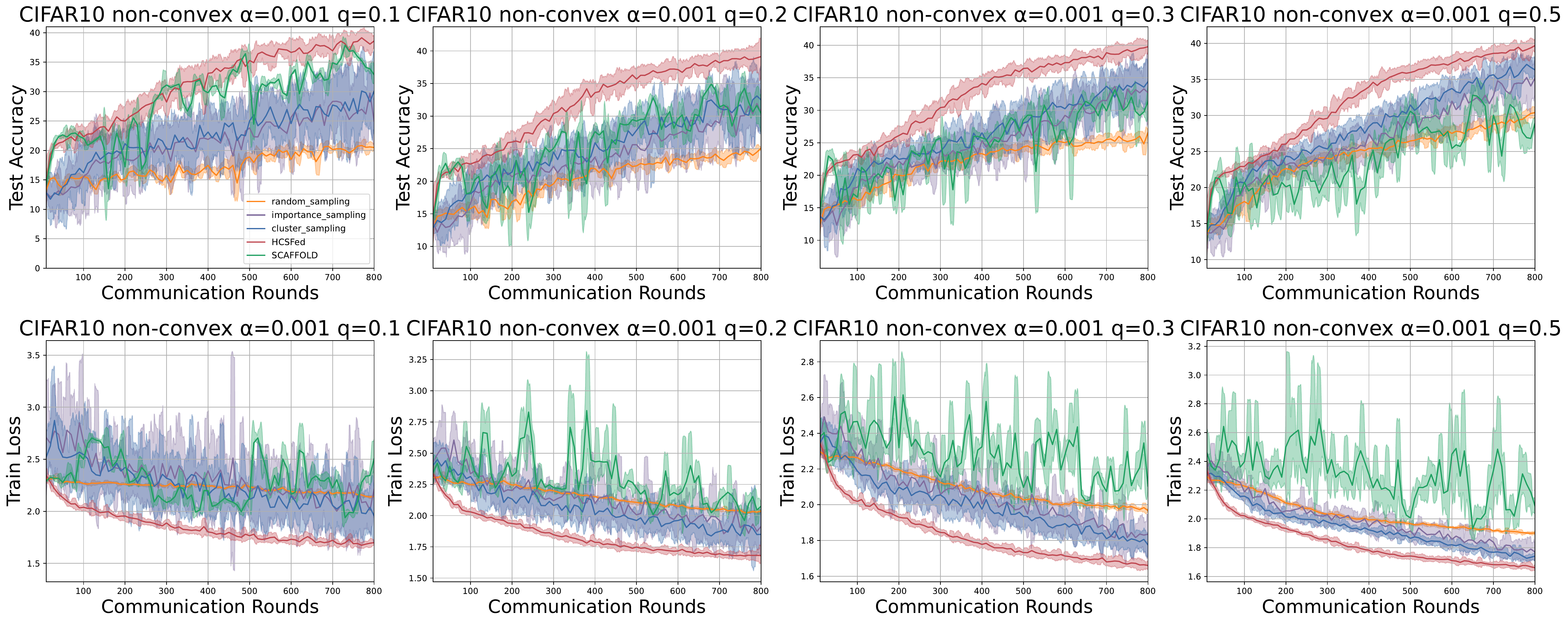}
    \caption{Impact of the heterogeneity on the performance with non-convex model. We compare \texttt{HCSFed} with simple random sampling, importance sampling, cluster sampling and SCAFFOLD on CIFAR-10, using a Dirichlet Distribution with $\alpha=0.001$, setting parameters $q \in \{0.1, 0.2, 0.3, 0.5\}$, $N=100$, $nSGD=80$, $\eta=0.05$, $B=50$.}
    \label{app:fig5}
\end{figure}
\subsection{Extra Experiments on FedNova}
\begin{figure}[!htbp]
    \centering
    \centering
    \includegraphics[width=13.5cm]{./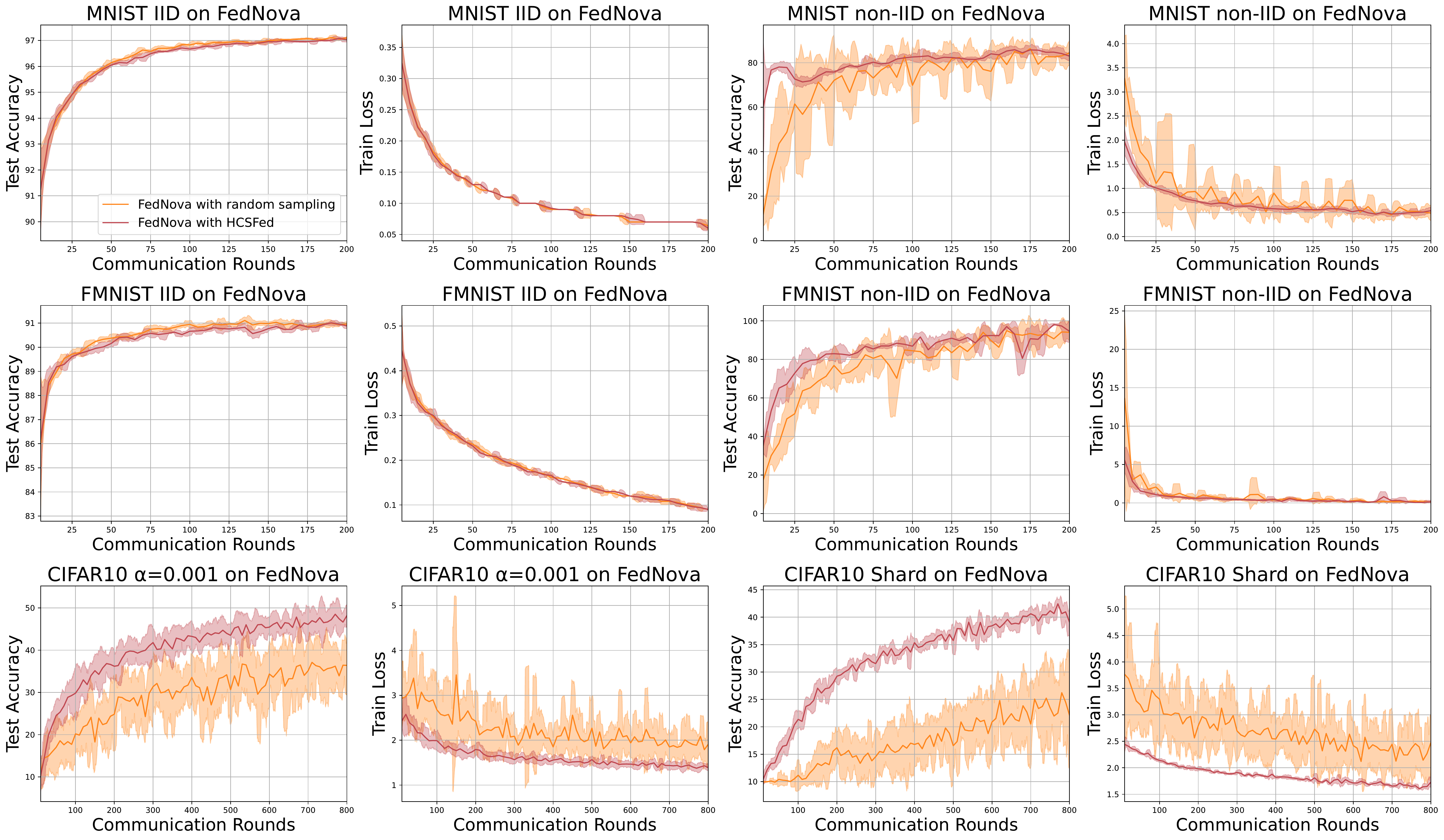}
    \caption{Results on the MNIST, FMNIST and CIFAR10 under FedNova. We compare \texttt{HCSFed} with simple random sampling, using a Dirichlet Distribution with $\alpha =0.001$ and Shard, setting parameters $q=0.1$, $N=100$, $nSGD=50$ for MNIST and FMNIST, $nSGD=80$ for CIFAR-10, $\eta=0.05$, $B=50$.}
    \label{app:fig6}
\end{figure}
\clearpage
\end{document}